\documentclass[]{bytedance_seed}

% single-column: \documentclass[]{bytedance_seed}, 
%Please prioritize using single-column。

% twocolumn: \documentclass[twocolumn]{bytedance_seed}

\usepackage[toc,page,header]{appendix}

%%%%%%%%%%%%%%%%%%%%%%%%%%%%%%%%%%%%

\usepackage{minitoc}
\usepackage{algorithm}
\usepackage{algorithmic}
\usepackage{amsmath}
\usepackage{multirow}
\usepackage{subcaption}
\usepackage{amsthm}
\usepackage[inline]{enumitem}
\usepackage{makecell}
\usepackage{wrapfig}
\usepackage{amssymb}

\newtheorem{lemma}{Lemma}

\usepackage{wrapfig,booktabs}

%%%%%%%%%%%%%%%%%%%%

\title{ SALS: Sparse Attention in Latent Space for KV cache Compression }

\author[1,2,*]{Junlin Mu}
\author[2,\dagger]{Hantao Huang}
\author[2]{Jihang Zhang}
\author[2,\dagger]{Minghui Yu}
\author[1,\dagger]{Tao Wang}
\author[1,\dagger]{Yidong Li}

%论文单位请使用ByteDance Seed
\affiliation[1]{Beijing Jiaotong University}
\affiliation[2]{ByteDance Seed}

\contribution[*]{Work done at ByteDance Seed}
\contribution[\dagger]{Corresponding authors}

\abstract{
Large Language Models (LLMs) capable of handling extended contexts are in high demand, yet their inference remains challenging due to substantial Key-Value (KV) cache size and high memory bandwidth requirements. Previous research has demonstrated that KV cache exhibits low-rank characteristics within the hidden dimension, suggesting the potential for effective compression. However, due to the widely adopted Rotary Position Embedding (RoPE) mechanism in modern LLMs, naive low‑-rank compression suffers severe accuracy degradation or creates a new speed bottleneck, as the low-rank cache must first be reconstructed in order to apply RoPE.
% as the low-rank cache must first be reconstructed in order to match the dimensionality required by RoPE.
% as RoPE operates in the original feature space, requiring the low-rank cache to be reconstructed before it can be applied.
In this paper, we introduce two key insights: first, the application of RoPE to the key vectors increases their variance, which in turn results in a higher rank; second, after the key vectors are transformed into the latent space, they largely maintain their representation across most layers. Based on these insights, we propose the Sparse Attention in Latent Space (SALS) framework. SALS projects the KV cache into a compact latent space via low-rank projection, and performs sparse token selection using RoPE-free query--key interactions in this space. By reconstructing only a small subset of important tokens, it avoids the overhead of full KV cache reconstruction. We comprehensively evaluate SALS on various tasks using two large-scale models: LLaMA2-7b-chat and Mistral-7b, and additionally verify its scalability on the RULER-128k benchmark with LLaMA3.1-8B-Instruct. 
% Experimental results demonstrate that SALS maintains competitive accuracy under different parameter sharing ratios, while achieving up to 6.4-fold KV cache compression, 5.62-fold speedup in self-attention computations, and 4.57-fold improvement in end-to-end throughput compared to FlashAttention2 and GPT-fast, respectively. The source code will be publicly available upon publication.
Experimental results demonstrate that SALS achieves SOTA performance by maintaining competitive accuracy. Under different settings, SALS achieves 6.4-fold KV cache compression and 5.7-fold speed-up in the attention operator compared to FlashAttention2 on the 4K sequence. For the end-to-end throughput performance, we achieves 1.4-fold and 4.5-fold improvement compared to GPT-fast on 4k and 32K sequences, respectively. The source code will be publicly available in the future.
}

\date{\today}
\correspondence{
Tao Wang at \email{twang@bjtu.edu.cn},
Hantao Huang at \email{huanghantao@bytedance.com},
Minghui Yu at \email{yuminghui.exp@bytedance.com},
Yidong Li at \email{ydli@bjtu.edu.cn}
}

% You can add additional info fields as follows 
% \checkdata[Project Page]{\url{xxx}}

\begin{document}
\maketitle

%不需要目录就注释掉 注意目录不要和第一页放在一块 要有\newpage
%\newpage
%\tableofcontents
%\newpage

\section{Introduction}

The groundbreaking success of Large Language Models (LLMs), such as ChatGPT~\cite{openai2023chatgpt} and Claude~\citep{anthropic2023claude}, is transforming information retrieval in many areas.
The exponential growth in LLM service requests is creating an unprecedented demand for inference optimization algorithms, especially for long-context applications. In particular, many research efforts~\cite{liu2024kivi, singhania2024loki, zhang2024h2o,yang2024pyramidinfer} show that the Key-Value cache (KV cache) acts as a major performance bottleneck in LLM serving. 
% For instance, to serve a LLaMA3-70B model with FP16 format, a batch size of 1024, and a prompt length of 2048, it requires 130GB of storage space for model weights and an additional 1024 GB for the KV cache.
As the sequence length increases, the KV cache consumes a large portion of GPU device memory.
This high resource requirement underscores the urgent need to mitigate the KV cache memory overhead and further improve attention efficiency.

To tackle the KV cache overhead challenge, previous works~\cite{chang2024palu, singhania2024loki} show that the KV cache exhibits low-rank characteristics within the hidden dimension, and thus can be effectively compressed in the latent space. Palu~\cite{chang2024palu} further shows that there exists a clear trade-off between accuracy and reconstruction overhead for multi-head attention. Compressing each KV head separately reduces the overhead but at the cost of losing accuracy.  However, compressing all heads together by low-rank projection improves accuracy by preserving the global information but results in significantly increased reconstruction cost. In this work, we show that sparsely selecting a small subset of tokens from the KV cache significantly reduces reconstruction error while preserving model accuracy.

% In this paper, we introduce two key insights: first, the application of RoPE to the key vectors increases their variance, which in turn results in a higher rank; second, after the key vectors are transformed into the latent space, they largely maintain their representation across most layers. Based on these insights, we propose the Sparse Attention in Latent Space (SALS) framework. SALS projects the KV cache into a compact latent space via low-rank projection, and performs sparse token selection using RoPE-free query--key interactions in this space. By reconstructing only a small subset of important tokens, it avoids the overhead of full KV cache reconstruction. We comprehensively evaluate SALS on various tasks using two large-scale models: LLaMA2-7b-chat and Mistral-7b. Experimental results demonstrate that SALS maintains competitive accuracy under different parameter sharing ratios, while achieving up to 8-fold KV cache compression, 5.3-fold speedup in self-attention computations, and 2.3-fold improvement in end-to-end throughput compared to FlashAttention2 and GPT-fast, respectively. The source code will be publicly available upon publication.

Recently, Rotary Position Embedding (RoPE)~\citep{su2024roformer} has been widely adopted in LLMs; it introduces sinusoidal positional information by multiplying query and key states with rotation matrices. This not only prevents the fusion of low rank weights into the query state, but also requires the reconstruction of key states from the latent space.  In this work, we observe that the application of RoPE to the key vectors increases their variance, which in turn results in a higher rank. Therefore, the KV cache must be compressed before applying RoPE, which leads to high reconstruction complexity.
We further observe that the key vectors in the latent space largely preserve their representation across most layers. Inspired by Double Sparse~\citep{yang2024post}, we use these compressed key vectors as token selection guidance, so that only the selected tokens need to be reconstructed, which significantly reduces the reconstruction complexity.

In this paper, we propose the Sparse Attention in Latent Space (SALS) framework. SALS projects all attention heads into a shared single-head latent space via low-rank projection, for both pre-RoPE queries and pre-RoPE keys.  The approximation attention scores are computed in the latent space and the top-k tokens are selected accordingly. 
These selected tokens are then reconstructed and applied with RoPE for the final attention computation.
By reconstructing only a small subset of all  tokens, it avoids the overhead of full KV cache reconstruction. 
% We comprehensively evaluate SALS on a range of long-context tasks using two large-scale models: LLaMA2-7b-chat and Mistral-7b. Experimental results demonstrate that SALS significantly outperforms existing low-rank KV cache compression methods in accuracy. Compared to prior sparse attention approaches, SALS achieves similar or better accuracy while offering 6.4$\times$ KV cache compression and a 5.62$\times$ speed-up in the attention operator over FlashAttention2 on 4K sequences. For end-to-end throughput, SALS yields 1.38$\times$ and 4.57$\times$ improvements over GPT-fast on 4K and 32K sequences, respectively.

In summary, this paper makes the contributions as follow:
\begin{itemize}
    \item We observe that applying RoPE to key vectors leads to increased variance, resulting in higher rank and lower compression rates. In , key vectors before RoPE can still maintain the representation for the critical token selection.
    \item We propose the Sparse Attention in Latent Space (SALS) framework to utilize the low-rank pre-ROPE KV cache compression and further to adopt the KV cache in latent space to select critical tokens.  The sparse attention is then proposed with reduced KV data movement and computational complexity. 
    \item We comprehensively evaluate SALS on various tasks using two large-scale models: LLaMA2-7b-chat and Mistral-7b, and additionally verify its scalability on the RULER-128k benchmark with LLaMA3.1-8B-Instruct.
    Experimental results show that we can achieves much higher accuracy comparing to the low-rank based KV cache compression. Comparing to sparse attention works, under the similar benchmark accuracy, SALS achieves 6.4-fold KV cache compression and 5.7-fold speed-up in attention operator compared to FlashAttention2 on 4K sequence. For the end-to-end throughput performance, SALS achieves 1.4-fold and 4.5-fold improvement compared to GPT-fast on 4k and 32K sequences respectively. 
\end{itemize}

\begin{table}[tb!]
    \centering
    \caption{KV data movement, memory cost and complexity comparison of quantization, low rank and token sparse method that is with fixed, dynamic and local token selection strategy.  \label{tbl:method}}
        \label{tab:method}
    \resizebox{1\linewidth}{!}{
    \begin{tabular}{lccccc}
        \toprule 
        \textbf{Name} & \textbf{Methods} & \textbf{KV data movement} & \textbf{Memory size} & \textbf{Computation Complexity} & \textbf{Accuracy} \\
        \midrule
        % KiVi ~\citep{liu2024kivi} & Quantization & Median & Median &  Low  &Median \\
         Palu ~\citep{chang2024palu} & Low Rank & Median & Low &  High  &Low \\  
        Loki ~\citep{singhania2024loki} & Dynamic + Low Rank & Low &  Median &  Median  &Median \\         
        \midrule
        StreamingLLM~\citep{zhang2024h2o} & Fixed pattern & Low & Median &  Low  &Low \\
         Quest~\citep{tang2024quest} &  Dynamic & Low & High & Median & Median\\
         DS~\citep{yang2024post} & Dynamic & Low & Median &  Median & High \\
         Hshare~\citep{wuhshare} & Dynamic & Low & Median &  Median & High \\
         \midrule
         \textbf{SALS (Ours)} &  Dynamic+Low Rank& Low & Low & Midian &High \\
        \bottomrule
        \end{tabular}}
        \vspace{-13pt}
\end{table}
\section{Related Work}

\subsection{Attention in Latent Space}
The low rank matrix decomposition such as singular value decomposition (SVD) to compress LLMs has been actively studied. Existing works such as ASVD~\citep{yuan2023asvd}, SVD-LLM~\citep{wang2024svd} and CALDERA~\citep{saha2024compressing} are proposed to compress LLM weights. However, as the context sequence length and batch size increase, the KV cache size is getting larger than LLM weights, urging the need for KV cache compression. 
% Recent works~\citep{chang2024palu,saxena2024eigen} have been proposed to compress KV cache,  and further shown that LLMs with RoPE are incompatible with any modification to the embedding dimension of the keys or queries.
% Eigen Attention~\citep{saxena2024eigen} tackles this challenge by reconstructing the low-rank key vectors to the full rank before the multiplicationn of RoPE. 
Eigen Attention~\citep{saxena2024eigen} has been proposed to compress the KV cache after applying RoPE but suffers a relatively large accuracy loss. To improve the accuracy, the most naive way is to compress the pre-RoPE KV cache. 
% Eigen Attention~\citep{saxena2024eigen} tackles this challenge by reconstructing the low-rank key vectors to the full rank before the multiplicationn of RoPE. 
However, as Palu~\citep{chang2024palu} points out, this will greatly introduce additional computation for recovering the key vectors. Palu further proposes the grouped-head
low-rank decomposition optimization to reduce the computation overhead. We agree with Palu's observation but tackle the computation from the sparse perspective.  By only selecting a subset of all the tokens (the critical tokens), the low-rank key vectors reconstruction complexity is greatly reduced. By integrating sparse attention, we observe that the KV cache size, data movement and computation complexity are all reduced as summarized in Table \ref{tab:method}.

% The recent work Deepseek-R1~\citep{guo2025deepseek} adopts Multi-head Latent Attention (MLA), which reduces KV-cache size by down-projecting key and value to a low-rank space to save the KV cache size and computational load. Although the high performance of MLA is achieved with pretraining, this shows great potential to perform attention in latent space. Existing work such as Palu ~\citep{chang2024palu} proposes group-head low rank decomposition to avoid the high overhead of recontruction. However, this method still suffers from the non-negligible accuracy loss. Another representative work is Loki~\citep{singhania2024loki}, which performs the PCA transformation of KV cache and then selects
% tokens based on attention scores computed in low-dimensional space for sparse attention. However, this method still suffers from the increasing KV cache size.  In contrast, our method stores the KV cache in the low-dimensional space and only recover them when need exact high-precision sparse attention.  As such, both KV cache size and bandwidth requriments are alleviated. 

\subsection{Sparse Attention}
\label{sec::sparse}

Early efforts to reduce attention complexity in transformers, such as Sparse Transformer~\citep{child2019generating}, Reformer~\citep{kitaev2020reformer}, and Longformer~\citep{beltagy2020longformer}, primarily focused on training-based approaches to achieve sparse attention.
In contrast, recent studies~\citep{ribar2023sparq,zhang2024h2o,tang2024quest} have explored \textit{post-training sparse attention}, leveraging the inherent sparsity of attention scores to identify the most salient tokens without retraining.
For instance, StreamingLLM~\citep{xiao2023efficient} identifies initial and recent tokens as critical, while others~\citep{zhang2024h2o,zhang2024kv} accumulate attention scores to locate important tokens in the KV cache.
Quest~\citep{tang2024quest} assesses KV cache page importance via min/max metrics, and Double-sparse~\citep{yang2024post} selects key KV tokens across important channels.
However, most of these post-training approaches mainly reduce bandwidth and computation cost, leaving the KV cache uncompressed and thus a potential bottleneck~\citep{yang2024pyramidinfer,singhania2024loki}.

Natively Sparse Attention (NSA)~\citep{yuan2025nativesparseattentionhardwarealigned} introduces a dynamic hierarchical sparse strategy combining coarse-grained token compression with fine-grained token selection.
Unlike SALS, NSA requires training from scratch to learn a compressed attention module.
In contrast, \textsc{SALS} can be calibrated \textit{post-training}, performing top-$k$ token selection based on low-rank latent approximations rather than a trained sparse module.
This allows SALS to achieve comparable sparsity behavior without retraining and with precise control over the KV-cache compression process.

Overall, our work addresses the limitation of prior sparse attention methods by storing and operating on a low-rank KV cache, then selectively reconstructing high-dimensional representations for exact sparse attention. This design preserves accuracy while substantially reducing memory and computation overhead.

% \begin{figure}[!t]
%   \centering
%   \begin{subfigure}[b]{0.47\textwidth}
%     \includegraphics[width=\linewidth]{image/attention_timing_bar_stacked_square_v0.pdf}
%     \caption{Increasing inference time due to low-rank matrix recovery}
%     \label{fig:1a}
%   \end{subfigure}
%    \hfill
%   \begin{subfigure}[b]{0.35\textwidth}
%     \includegraphics[width=\linewidth]{image/rotation_pca_square66.pdf}
%     \caption{PCA direction changed after RoPE operation (blue line direction to red line direction)}
%     \label{fig:1b}
%   \end{subfigure}
%   \caption{Key Challenges for Long‑Context Attention: Recovery Latency and RoPE Rotation}
%   \label{fig:1}
%   \vspace{-15pt}
% \end{figure}

\begin{figure}[!t]
  \centering
 
  \includegraphics[width=\linewidth]{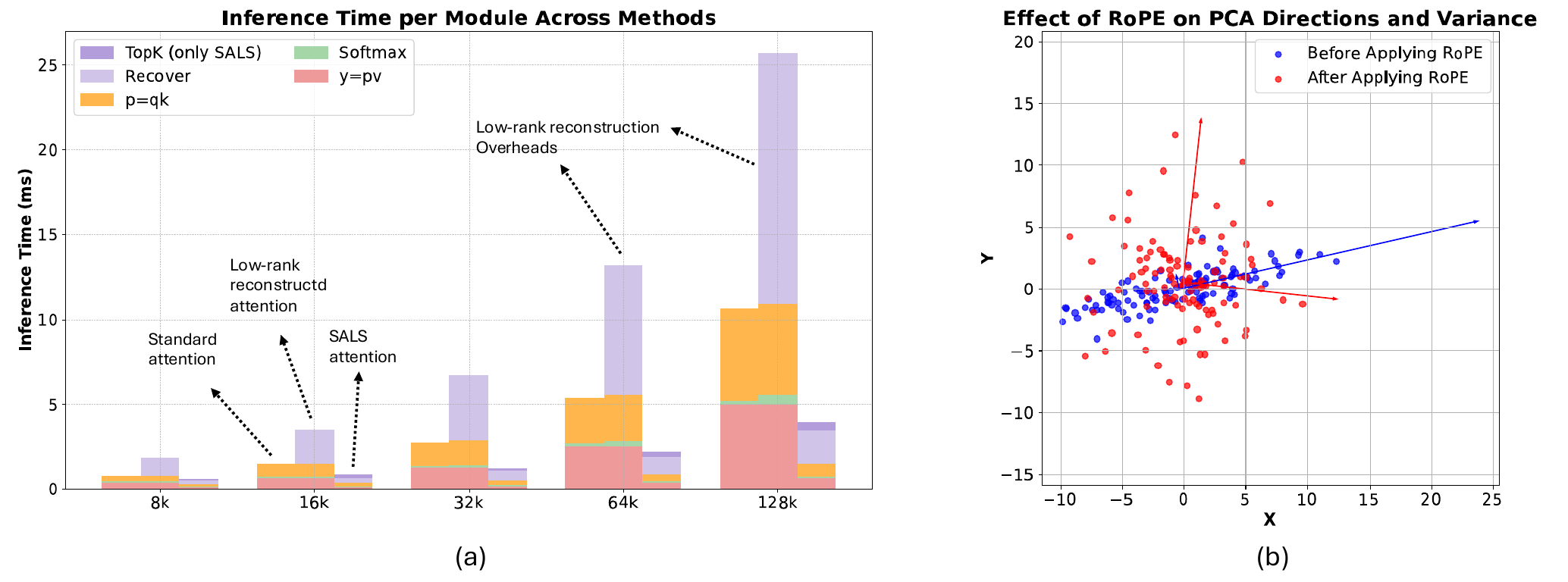}
  \caption{(a) Increasing inference time due to low-rank matrix reconstruction. 
  Low-rank KV cache with overhead leads to longer inference time than the standard attention due to the reconstruction overhead; (b) A simplifed key vector example with changed PCA direction after applying RoPE }
   \label{fig:1}
  \vspace{-10pt}
\end{figure}

%\subsection{Hello World}
\section{Challenges on Latent Space Transformation}

\label{sec:pf}
% 在这一章里，我们首先讨论了rope机制的引入，对原始kvcache向latent space转换造成的影响。然后，我们讨论了latent space中token的表达能力。
In this section, we first review the RoPE mechanism preventing the fusion of query and key states in latent space. We also show that after RoPE, the key vector variance increases as well as the rank.
We then analyze the pre-RoPE representational capability of tokens in the latent space, focusing on their ability to preserve key semantic features.

\subsection{Latent Space Transformation with RoPE}

% RoPE机制的引入，使得latent space transformation变得复杂。我们首先来看没有rope的情况，
The introduction of the RoPE mechanism complicates the low-rank transformation into the latent space. We begin by considering the case without RoPE. Let $\mathbf{K} \in \mathbb{R}^{s \times d}$ denote the key matrix, where $s$ is the number of tokens and $d$ is the dimensionality of each head. 
% An orthonormal projection matrix $\mathbf{U} \in \mathbb{R}^{d \times r}$, with $r \ll d$, retaining the top $r$ components that capture the maximum variance \textcolor{red}{$\mathbf{K^T}\mathbf{K}$, add citation?}. 
By applying an orthonormal projection matrix $\mathbf{U} \in \mathbb{R}^{d \times r}$, with $r \ll d$, we can transforme origin Key matrix into latent space:
\begin{equation}
\label{eq:u}
\widetilde{\mathbf{K}}=\mathbf{K}\,\mathbf{U}
\end{equation}
so that each original Key vector
$\mathbf{k}_t\in\mathbb{R}^{d}$ is mapped to latent space
$\widetilde{\mathbf{k}}_t\in\mathbb{R}^{r}$ leading to kv cache compression rate $d/r$. This method works well on the basis of that 
\begin{equation}
\label{eq_basis}
\mathbf{U}\mathbf{U^T} \approx \mathbf{I}, \ \
\mathbf{Q}\mathbf{K^T} \approx  \mathbf{Q}\mathbf{U}\mathbf{U^T}\mathbf{K^T}
\end{equation}
However, once RoPE is introduced, the relative order between the RoPE rotation matrix \(\mathbf R\) and the low‑rank projector \(\mathbf U\mathbf U^{\!\top}\) becomes crucial. In Equation \ref{eq:rope_low_rank}, we show both design choices in a single expression: (i) pre-‑RoPE transformation, where keys are first transformed and then rotated, and (ii) post-‑RoPE transformation, where keys are rotated first and transformed afterwards. 
For a query at position \( i \) and a key at position \( j \), the attention score be approximated in two low‑rank forms:
\begin{equation}
  \mathbf Q_{i}\,\mathbf K_{j}^{\!\top}
  \;\approx\;
  \underbrace{\mathbf Q\,\mathbf R_i
    \bigl(\mathbf R^{\!\top}_j\,\mathbf U\mathbf U^{\!\top}\bigr)\,
    \mathbf K^{\!\top}}_{\text{pre‑-RoPE}}
  \quad\text{or}\quad
  \underbrace{\mathbf Q\,\mathbf R_i
    \bigl(\mathbf U\mathbf U^{\!\top}\,\mathbf R^{\!\top}_j\bigr)\,
    \mathbf K^{\!\top}}_{\text{post‑-RoPE}},
  \label{eq:rope_low_rank}
\end{equation}
where $\mathbf{R}_i$ and $\mathbf{R}_j$ are the RoPE rotation matrices for positions $i$ and $j$, respectively.

% 对于这两种选择来说，pre-RoPE转换可以通过保存中间变量$KU$，在计算rope前对保存的中间变量进行重建$KUU^T$得到与原模型相同的RoPE维度来应用RoPE。post-RoPE转换同样保存中间变量$KR_jU$，不同的是通过结合律，我们可以将$U$与$QR_i$结合，使得我们可以不重建得到对应变量，与没有引入RoPE时相同。

% To keep decoding memory low, we store only a rank-‑\(r\) KV cache in the latent space.  
% In the \emph{pre‑-RoPE} scheme we cache
% \( \widetilde{\mathbf{K}}\) and reconstruct the
% full‑rank keys as \( \widetilde{\mathbf{K}}\mathbf U^{\!\top} \) 
% before the RoPE rotation.  
% In the \emph{post‑-RoPE} scheme we instead cache
% \( \widetilde{\mathbf{K}}=\mathbf K\mathbf R_j\mathbf U \), and we can transform rotated query into latent space \(\widetilde{\mathbf{Q}}=\mathbf Q\mathbf R_i\mathbf U\). 
% Yet, as we shall see, placing the projector \(\mathbf U\mathbf U^{\!\top}\) \emph{before} or \emph{after} the RoPE rotation is not merely a matter of bookkeeping—each variant comes with its own accuracy and efficiency pitfalls that must be carefully weighed.

Ideally, we can adopt the post-RoPE transformation to cache \( \widetilde{\mathbf{K}}=\mathbf K\mathbf R_j\mathbf U \) and transform the rotated query into a latent space \(\widetilde{\mathbf{Q}}=\mathbf Q\mathbf R_i\mathbf U\) to avoid the reconstruction complexity.  However, as we will discuss later, applying RoPE will rotate the key vectors with larger variance, making them difficult to approximate using a low-rank matrix. On the other hand, in the pre‑-RoPE transformation, caching
 \( \widetilde{\mathbf{K}}\) and reconstructing the
 full-‑rank keys as \( \widetilde{\mathbf{K}}\mathbf U^{\!\top} \)  will result in substantial overhead.

\textbf{Variance amplification for post--RoPE rotation.}
% 在之前的工作中[palu,loki]，普遍都观察到Key cache具有低秩特性，使用较少的rank就可以保持90%的energy。但这种情况只存在于rope前的Key cache。
Previous works~\citep{chang2024palu,singhania2024loki} observe that the key cache exhibits low-rank characteristics with a relatively small rank to retain approximately 90\% of the energy. As such, we can use a single projection weight $\mathbf{U}$ as shown in Equation \ref{eq:u} to compress the key cache. 
However, we find that this phenomenon only holds for the key cache prior to applying RoPE. From Figure~\ref{fig:1}(b),  we observe that applying RoPE to a set of key vectors pushes the data points outward and rotates their principal axes.
% causing the rank-$r$ subspace that best explains the variance to tilt continuously with token position. 
As shown in Figure~\ref{fig:1}(b), the principal component is rotates away from its original direction and the points become more scattered with two main principal components. This indicates the increased rank due to applying RoPE. Moreover, the principal components rotate based on the token position, suggesting that a single fixed projection matrix may no longer approximate all of these rotated subspaces.
%resulting in significantly higher reconstruction error compared to compressing the vectors before the rotation is applied.
Therefore, a pre--RoPE key for latent space transformation is preferable for maintaining accuracy.

% or Variance Amplification from RoPE
% 图1展示了一些低秩数据在二维平面上的分布和对应的主成分向量，以及对这些低秩向量应用rope旋转后的分布和主成分向量
% \paragraph{Rotation--recovery becomes the compute wall in long contexts.}
% Although performing low-rank compression prior to RoPE yields high accuracy, it also introduces additional overhead. .% or Rotation-Compute Wall in Long Contexts
% 图2展示了full attention和rope前低秩压缩下，attention模块的不同组件在不同seqlen下的耗时。可以看出，在rope前低秩压缩时，从低秩状态recover的计算耗时占比随着seqlen增长不断变大。

\textbf{Compute overhead during pre--RoPE reconstruction}
From Figure~\ref{fig:1}(a), we compare full attention with pre--RoPE low--rank compression across sequence lengths ranging from 1\,K to 32\,K tokens. While low--rank compression can reduce memory size, the time spent reconstructing low-rank keys/values from their rank-$r$ form and applying the RoPE rotation grows linearly with sequence length and soon dominates the total attention runtime at 32\,K tokens. This aligns with our analysis of Equation \ref{eq:rope_low_rank}, which intruoduces a trade-off of memory savings and computational cost. 
%We can also consider to perform the low rank compression after RoPE on $\mathbf{K^T_{pos}}$, however, this will leads to variance amplification due to RoPE rotation. 

%\subsection{Retrieval Accuracy for Sparse Attention}
\subsection{Token Representation in Latent Space}
\label{subsec:token-rep}
\begin{wrapfigure}{tr}{0.3\textwidth}
\vspace{-20pt}
  % \caption{\textcolor{red}{High overalp rate of pre-RoPE across layers}}
  \begin{center}
    \includegraphics[width=\linewidth]{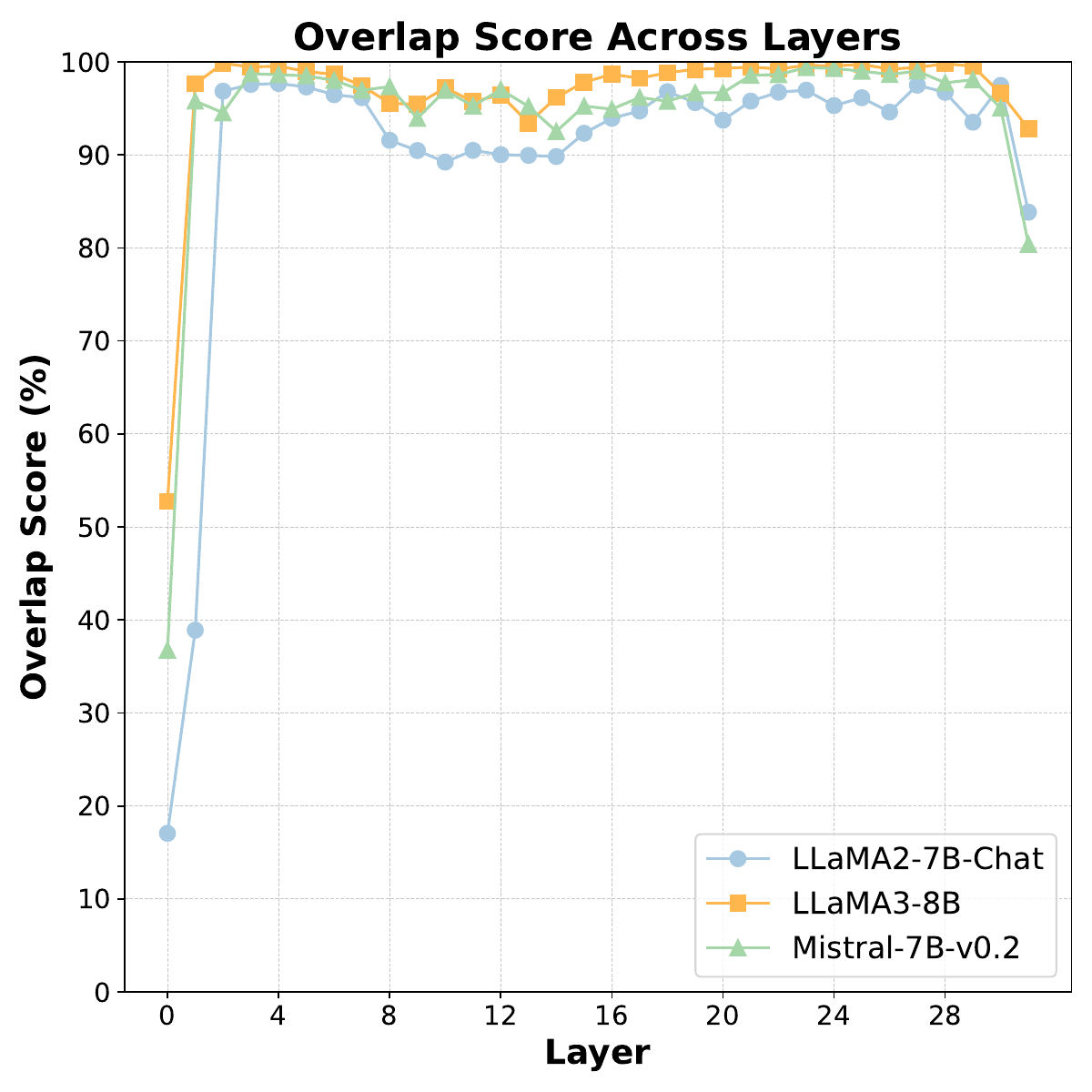}
  \end{center}
  \vspace{-10pt}
  \caption{High overalp rate of pre-RoPE across layers}
  \label{fig:overlap}
  \vspace{-20pt}
\end{wrapfigure}
% 为了衡量token in latent space还保留了origin的多少，我们首先直接计算latent space中的attn_score:\widetilde{\mathbf{p}}=\widetilde{\mathbf{Q}}\widetilde{\mathbf{K}}，在这里\widetilde{\mathbf{Q}}=\mathbf{Q}U for pre-rope and \widetilde{\mathbf{Q}}=\mathbf{Q}R_iU for post rope。令$\mathcal{C}\subseteq\{1,\dots,s\}$为\widetiled{\mathbf{p}}中的top--N_c个最大值对应的下标，则我们可以定义overlap score等于：

%\paragraph{Overlap Score.}
%To quantify how much of the full attention representation survives after transformed into the latent space, 
To quantify the token representations of the query and key in the latent space, 
we first compute the latent‑-space attention vector 
\(
  \widetilde{\mathbf{p}}
    = \widetilde{\mathbf{Q}}\widetilde{\mathbf{K}},
\)
where 
in the pre‑-RoPE setting, 
\(
  \widetilde{\mathbf{Q}} = \mathbf{Q}\mathbf{U}.
\)
% and in the post‑-RoPE setting,
% \(
%   \widetilde{\mathbf{Q}} = \mathbf{Q}\mathbf{R}_i\mathbf{U}.
% \)
Let  
\(
  \mathcal{C} = \{i_1,i_2,\ldots,i_{N_c}\}\subseteq\{1,\dots,s\}, 
  \quad N_c \ll s,
\)
denote the index set corresponding to the top‑-\(N_c\) entries of \(\widetilde{\mathbf{p}}\).
The \emph{overlap score (OS)} measures the token representation in the latent space, which is fraction of the full attention mass that these indices capture. $OS=\sum_{i\in\mathcal{C}} p_i / \sum_{i=1}^{s} p_i$,
% \begin{equation}
%   \operatorname{OS}(\mathcal{C},p)
%     \;=\;
%     \displaystyle\sum_{i\in\mathcal{C}} p_i \Big/
%           \displaystyle\sum_{i=1}^{s} p_i,
%   \label{eq:overlap-score}
% \end{equation}
where \(p_i\) denotes the \(i\)-th entry of the full  attention distribution \(p\in\mathbf{R}^{s}\).

Figure~\ref{fig:overlap} presents the overlap score for LLaMA--7B‑-chat, LLaMA--8B and Mistral-‑7B‑-v0.2. 
% Recent work shows that keeping RoPE only in the first and last blocks while removing it from roughly the middle~$75\%$ of layers causes virtually no quality degradation~\citep{yang2025rope}. 
We observe that the average overlap score mostly remains above~$90\%$ for layers~2–-29 but drops below~$50\%$ in layers~0, 1. A similar observation has been reported for hybrid RoPE attention design~\citep{yang2025rope}.
This finding indicates that, even when RoPE is completely omitted, latent space tokens can still preserve almost all of the full attention scores across the majority of layers.
\section{Sparse Attention in Latent Space Framework}

\begin{figure}[!t]
  \centering
  \includegraphics[width=\linewidth]{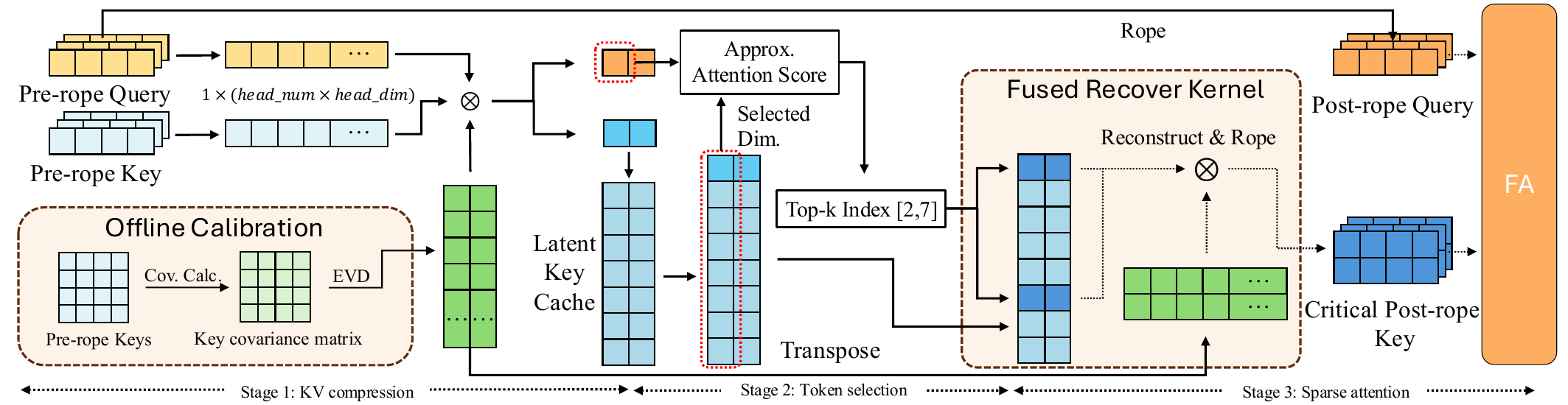}
  \caption{Overall architecture of SALS. Three stages are introduced with stage 1 for multi-head KV Cache compression, stage 2 for token selection in latent space and stage 3 for sparse attention. }
  \label{fig:overall}
\end{figure}

\label{sec:sals-framework}

% 
% In this paper，我们使用低秩投影矩阵将pre-rope key投影到latent space，并直接使用latent space中的query和key计算近似attention score。使用近似attention score，我们可以选出top-k critical token，并有选择地将这些token投影回原空间并附加位置信息，从而计算original attention。图3和算法1展示了SALSkvcache压缩方法的总体架构。
% 
% 由于Value在attention输出中占据着重要的角色，且具有几乎为满的秩，因此对于Value，不使用低秩压缩而使用量化方法是一个更好的选择。我们对Value在channel维度进行Group量化，既提高了精度同时不会造成较大的计算负担。

% In this paper, we first apply a low‑rank projection matrix to map the pre‑-RoPE keys into a compact latent space and compute approximate attention scores directly between the latent queries and keys. These scores allow us to identify the top-‑$k$ critical tokens, which we then selectively project back to the original space, attach positional information, and use to compute the exact attention. Algorithm~\ref{alg:sparse} and Figure~\ref{fig:overall} summarize the overall architecture of our SALS KV‑cache compression method.

\subsection{SALS Framework}
% To tackle reconstruction overhead in pre--RoPE transformation, we directly compute approximate scores using Query and Keys in latent space to identify the most critical $N_c$ tokens, and only reconstruct those critical keys in latent space to apply RoPE. With the reconstructed critical tokens, we can compute the exact attention.
% % which leverages a low-rank projection matrix to map the multi-head pre-RoPE keys into a compact latent space, where multi-head queries and keys become single-head.  The approximate socres are computed directly using the latent queries and keys to identify the top-‑$k$ critical tokens. We then select these tokens to recover to the original space with multi-head fasion. Positional information such as RoPE is multiplied to compute the exact attention.  
% Algorithm~\ref{alg:sparse} and Figure~\ref{fig:overall} summarize the overall architecture of our SALS KV‑cache compression method.

% In this section, we introduce our proposed SALS, which leverages a low-rank projection matrix transform multi--head pre--RoPE Key into a compact single--head latent space to markedly shrinking the memory footprint.  
In this section, we introduce our proposed SALS, which leverages a low-rank projection matrix to transform multi-head pre-RoPE keys into a compact single-head latent space, thereby markedly shrinking the memory footprint.
To avoid the prohibitive cost of reconstructing the entire key cache, SALS estimates attention scores directly in the latent space, selects the top-‑\(N_c\) most critical tokens, and reconstructs only their corresponding keys. RoPE is only applied to the reconstructed keys, which are then used to compute the exact attention weights.
Figure~\ref{fig:overall} provides a high-‑level summary of the SALS pipeline. Here, we use the top-k index $[2,7]$ to indicate the selected tokens. The selected tokens are then reconstructed and reshaped to multi-head keys for sparse attention. The SALS framework is pipelined with three stages: KV cache compression to latent space, critical token selection in latent space and selective reconstruction from latent space for the later sparse attention.

\subsection{Multi-head KV cache Compression to Latent Space}

In the offline calibration process, we select a small calibration dataset from the pre-training corpus and collect its pre-RoPE key tensors.
Let 
$
\mathbf K\in\mathbb R^{s\times nd}
$ 
denote the stacked multi‑head keys of the $s$ calibration sequences.  
We begin by computing the empirical covariance of the calibration keys: $\mathbf C = \mathbf K^{\!\top}\mathbf K$. Applying an eigenvalue decomposition, $\mathbf C = \mathbf U \boldsymbol{\Sigma}\,\mathbf U^{\!\top}$, and selecting the leading \(r\) eigenvectors \(\mathbf U_r\) yields the optimal rank-‑\(r\) projection matrix for the latent space. Note that the key vectors $\mathbf K\in\mathbb R^{s\times nd}$ are reshaped by merging the head number with head dimension. We provide the rational using the following lemma. 

% We begin our method by providing one lemma that provide the rational to project multi-head attention to single-head latent space.  

% \paragraph离线校准
% follow之前的工作，我们使用预训练数据集的子集作为离线校准集，计算校准集在pre-rope key上的协方差。让K \in (s\times nh)  表示第s个sample的multi-head pre-rope key， 首先，我们计算其协方差Cov(K)=K^TK,对得到的协方差矩阵进行特征值分解(EVD)得Cov(K)=U\SigmaU^T，取Top-r个特征值最大的特征向量得到U_r作为multi head latent 投影矩阵。
% 对于任何可能的列正交矩阵来说，
% 在Multi-head attention架构下的低秩压缩方法有两种，一种是多头联合低秩压缩，一种是perhead低秩压缩，下面会给出一个lemma来证明多头联合压缩在任何时候都会优于per head低秩压缩。

\begin{lemma}
\label{lemma:optim}
For any column-orthonormal matrix $\mathbf{U} \in \mathbb{R}^{h \times r}$ satisfying $\mathbf{U}^\top \mathbf{U} = \mathbf{I}_r$, we define the captured variance by $\mathbf{U}$ as $E(\mathbf{U})$. For the multi--head joint projection, all possible projection matrix set is defined as $ \mathcal{U}_r :=\{\mathbf U\in\mathbb R^{nd\times r}\mid\mathbf U^{\!\top}\mathbf U=\mathbf I_r\}$, and $\mathcal B_r:=\{\operatorname{diag}\!\bigl(\mathbf U_1,\ldots,\mathbf U_n\bigr)\,|\,\mathbf U_i\in\mathbb R^{d\times r'},\; \mathbf U_i^{\!\top}\mathbf U_i=\mathbf I_{r'}\},$ where $r' = r/n$, is all possible projection matrix set for per--head projection.
For all situations, the optimal $\mathbf{U}$ from multi--head joint projection matrix can capture more enery then per--head projection matrix:
\[
  \max_{\mathbf U\in\mathcal U_r} E(\mathbf U)
  \;\;\ge\;\;
  \max_{\mathbf U\in\mathcal B_r} E(\mathbf U)
\]

\end{lemma}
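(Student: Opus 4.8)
The plan is to recognize this as a set-containment argument once the captured-variance functional is made explicit. First I would fix the meaning of $E(\mathbf{U})$ as the energy retained after projecting the calibration keys onto the column space of $\mathbf{U}$, namely $E(\mathbf{U}) = \operatorname{tr}\!\bigl(\mathbf{U}^{\!\top}\mathbf{C}\,\mathbf{U}\bigr)$ with $\mathbf{C} = \mathbf{K}^{\!\top}\mathbf{K}$. This is the standard PCA objective, and crucially it is the \emph{same} functional in both the joint and the per-head settings; only the feasible set of admissible projectors changes between the two.

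The core observation is that the per-head feasible set is contained in the joint one, $\mathcal{B}_r \subseteq \mathcal{U}_r$. I would verify this directly: take an arbitrary block-diagonal candidate $\mathbf{U} = \operatorname{diag}(\mathbf{U}_1,\ldots,\mathbf{U}_n)$ with each $\mathbf{U}_i \in \mathbb{R}^{d\times r'}$ and $\mathbf{U}_i^{\!\top}\mathbf{U}_i = \mathbf{I}_{r'}$. Since $r = nr'$, this matrix lives in $\mathbb{R}^{nd\times r}$, and because the off-diagonal blocks vanish we obtain $\mathbf{U}^{\!\top}\mathbf{U} = \operatorname{diag}(\mathbf{U}_1^{\!\top}\mathbf{U}_1,\ldots,\mathbf{U}_n^{\!\top}\mathbf{U}_n) = \operatorname{diag}(\mathbf{I}_{r'},\ldots,\mathbf{I}_{r'}) = \mathbf{I}_r$. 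Hence $\mathbf{U}$ satisfies the column-orthonormality constraint defining $\mathcal{U}_r$, so every per-head projector is also an admissible joint projector.

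With containment established, the inequality is immediate from the elementary fact that maximizing a fixed function over a larger domain can only increase the optimum: since $\mathcal{B}_r \subseteq \mathcal{U}_r$ and $E$ is defined identically on both, $\max_{\mathbf{U}\in\mathcal{U}_r} E(\mathbf{U}) \geq \max_{\mathbf{U}\in\mathcal{B}_r} E(\mathbf{U})$. Optionally I would reinforce this by invoking the Ky~Fan / Rayleigh--Ritz characterization, which identifies the unconstrained joint maximum with the sum of the top-$r$ eigenvalues of $\mathbf{C}$; the block-constrained maximum cannot exceed this bound because it is taken over a strict subset of the same orthonormal frames.

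The main (and essentially only) obstacle is bookkeeping rather than analysis: I must confirm that the block-diagonal construction genuinely meets the \emph{global} orthonormality constraint and that the dimension count $r = nr'$ is consistent, so that the containment $\mathcal{B}_r \subseteq \mathcal{U}_r$ is rigorous rather than merely suggestive. Once this embedding is verified, no further analytic work is required and the result follows.
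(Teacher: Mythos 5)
Your proposal is correct and follows essentially the same argument as the paper: verify that any block-diagonal per-head projector satisfies the global orthonormality constraint, conclude $\mathcal{B}_r \subseteq \mathcal{U}_r$, and invoke monotonicity of the maximum over nested feasible sets. Your added remarks --- making $E(\mathbf{U}) = \operatorname{tr}\bigl(\mathbf{U}^{\!\top}\mathbf{C}\,\mathbf{U}\bigr)$ explicit and noting the Ky~Fan characterization --- are sound embellishments but do not change the route.
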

% \begin{lemma}
% \label{lemma:optim}
% For any column-orthonormal matrix $\mathbf{U} \in \mathbb{R}^{h \times r}$ satisfying $\mathbf{U}^\top \mathbf{U} = \mathbf{I}_r$, we define the \emph{captured variance} by $\mathbf{U}$ as $V(\mathbf{U})$. For the multi-head joint projection, the full set of valid projection matrices is defined as $\mathcal{U}_r :=\{\mathbf U\in\mathbb R^{nd\times r}\mid\mathbf U^{\!\top}\mathbf U=\mathbf I_r\}$, and $\mathcal B_r:=\{\operatorname{diag}\!\bigl(\mathbf U_1,\ldots,\mathbf U_n\bigr)\,|\,\mathbf U_i\in\mathbb R^{d\times r'},\; \mathbf U_i^{\!\top}\mathbf U_i=\mathbf I_{r'}\},$ where $r' = r/n$, denotes the set of all block-diagonal per-head projection matrices.

% Then, the optimal joint projection captures no less variance than any per-head projection:
% \[
%   \max_{\mathbf U\in\mathcal U_r} V(\mathbf U)
%   \;\;\ge\;\;
%   \max_{\mathbf U\in\mathcal B_r} V(\mathbf U)
% \]
% \end{lemma}

\begin{proof}
Pick any
$
\mathbf U
   = \operatorname{diag}\!\bigl(\mathbf U_1,\dots,\mathbf U_n\bigr)
   \in \mathcal B_r
$
with $\mathbf U_i^{\!\top}\mathbf U_i=\mathbf I_{r'}$.
Because the blocks occupy disjoint coordinate ranges, their product is block‑diagonal:
\[
  \mathbf U^{\!\top}\mathbf U
    = \operatorname{diag}\!\bigl(\mathbf U_1^{\!\top}\mathbf U_1,\dots,
                                 \mathbf U_n^{\!\top}\mathbf U_n\bigr)
    = \operatorname{diag}(\mathbf I_{r'},\dots,\mathbf I_{r'})
    = \mathbf I_r.
\]
Hence $\mathbf U$ is column‑-orthonormal and thus $\mathbf U \in \mathcal U_r$;
consequently $\mathcal B_r \subseteq \mathcal U_r$.
Since every candidate in $\mathcal B_r$ is feasible for the joint search
$\mathcal U_r$, maximizing over the larger set cannot yield a smaller value.
\end{proof}

\subsection{Critical Token Selection in Latent Space}
% 之前的研究表明，attention在token层面具有较高的稀疏程度，但如何快速识别稀疏token一直以来都是这类方法最大的问题。motivate by 3.2的观察，使用没有附加rope信息的latent key可以很好的识别中间大部分层中的token稀疏分布。因此综合速度和准确率考虑，我们使用latent空间中的top-star(r)维，在不使用额外存储空间的情况下，快速计算近似attention score并选取其中的top-k个token，从而达到稀疏计算的目的。
% 鉴于目前的attention普遍处于访存瓶颈，我们可以简单的通过访存降低比例来预估算法加速比例。在我们的方法中，访存瓶颈主要出现在近似attention计算和

As discussed in previous works~\citep{wuhshare, yang2024post}, token‑wise attention is inherently sparse. However, identifying the
relevant (\emph{critical}) tokens fast enough remains the main bottleneck of most sparse-‑attention
schemes.  
Guided by the observation in Sec~\ref{subsec:token-rep} that pre‑-RoPE latent keys already reveal the sparsity
pattern of the middle layers, we forgo the expensive full‑-rank query–-key multiplication and
instead work in the latent space.

More specifically, let \(\mathbf U_r\in\mathbb R^{nd\times r}\) be the joint projector from Equation~\ref{eq:u} and let \(r^\star\ll r\) for scoring.  
We project the query once,
\(
  \tilde{\mathbf q}= \mathbf U_r^{\!\top}\mathbf q\in\mathbb R^{r},
\)
and keep only its leading coordinates
\(
  \tilde{\mathbf q}_{\!:\,r^\star}\in\mathbb R^{r^\star}.
\)
Each cached key already stores its full latent vector
\(
  \tilde{\mathbf k}_j=\mathbf U_r^{\!\top}\mathbf k_j,
\)
from which we extract the first \(r^\star\) dimensions,
\(
  \tilde{\mathbf k}_{j,:\,r^\star}\in\mathbb R^{r^\star}.
\)
The approximate attention score of token in position $j$ is a cheap inner product
\[
  s_j
    \;=\;
    \tilde{\mathbf q}_{:\,r^\star}^{\!\top}\,
    \tilde{\mathbf k}_{j,:\,r^\star}
\]
Thus, critical tokens are identified directly from the existing key cache,
without extra storage and at a fraction of the original compute.
% 对于attention来说，主要的速度瓶颈在于访存速度，因此算法的访存减少即可约等于加速比。对于full attention来说，其访存量约等于S * D * 2。对于近似attention计算(line6)来说，访存量约等于S*r\star。对于融合后的fa来说，访存量约等于2*k*r。因此整体来说，加速比约等于2*S*D/(S*r\star +2*k*r)=1 / (dr*/2+dr*ks)，其中dr*=D/r\star, dr=D/r, ks=S/k。
\renewcommand{\algorithmicrequire}{\textbf{Input:}}
\renewcommand{\algorithmicensure}{\textbf{Output:}}
\newcommand{\RComment}[1]{\hfill \textcolor{gray}{\# #1}}
\begin{algorithm}[t]
  \caption{Sparse Attention in Latent Space}
  \label{alg:sparse}
  \begin{algorithmic}[1]
    \REQUIRE Pre-RoPE query $\mathbf{q}\in\mathbb{R}^{nd}$, key $\mathbf{k}\in\mathbb{R}^{nd}$, value $\mathbf{v}\in\mathbb{R}^{nd}$, Projection matrix $\mathbf{U}_r\in\mathbb{R}^{nd\times r}$, Low-rank key cache $\widetilde{\mathbf{K}}\in\mathbb{R}^{(S-1)\times r}$, quantised value cache $\widehat{\mathbf{V}}\in\mathbb{R}^{(S-1)\times nd}$, Sparsity budget $k$, Approx latent rank $r^\star$
    \ENSURE Attended output $\mathbf{y}\in\mathbb{R}^{nd}$
    \STATE \textbf{Function} \textsc{SparseAttention}$(\mathbf{q}, \mathbf{k}, \mathbf{v}, \mathbf{U}_r, \widetilde{\mathbf{K}}, \widehat{\mathbf{V}}, k)$ 
    \STATE \hspace{1em} $\tilde{\mathbf{q}}\gets \mathbf{q}\mathbf{U}_r$;\hspace{1em} $\tilde{\mathbf{k}}\gets \mathbf{k}\mathbf{U}_r$ \RComment{Project new token to $r$-‑dim.\ latent space}
    \STATE \hspace{1em} $\widetilde{\mathbf{K}}\gets\operatorname{concat}(\widetilde{\mathbf{K}},\tilde{\mathbf{k}})$; \hspace{1em}$\widehat{\mathbf{V}}\gets\operatorname{concat}(\widehat{\mathbf{V}},\mathbf{v})$
    \STATE \hspace{1em} $\mathbf{p}'\gets  \tilde{\mathbf{q}}_{r^\star}\widetilde{\mathbf{K}}^{\top}_{r^\star}$ \RComment{Cheap similarity on top‑-$r^\star$ dims}
    \STATE \hspace{1em} $\mathcal{C}\gets\operatorname{TopK}(\mathbf{p}',k)$ \RComment{Selectr top--$k$ critical token index}
    \STATE \hspace{1em} $\mathbf{K}_{\mathcal{C}}\gets \widetilde{\mathbf{K}}_{\mathcal{C}}\mathbf{U}^{\top}_r$ \RComment{Reconstruct critical latent key cache}
    \STATE \hspace{1em} $\mathbf{q}^{R}\gets\operatorname{RoPE}(\mathbf{q})$; \hspace{1em}$\mathbf{K}^{R}_{\mathcal{C}}\gets\operatorname{RoPE}(\mathbf{K}_{\mathcal{C}})$              \RComment{Standard flash attention (FA) computation}
    \STATE \hspace{1em} $\mathbf{p}\gets\operatorname{softmax}\!\bigl(\mathbf{q}^{R}{\mathbf{K}^{R}_{\mathcal{C}}}^{\top}/\sqrt{d}\bigr)$
    \STATE \hspace{1em} $\mathbf{y}\gets \mathbf{p}\,\widehat{\mathbf{V}}_{\mathcal{C}}$
    \STATE \hspace{1em} \textbf{return} $\mathbf{y}$
    \STATE \textbf{End Function}
  \end{algorithmic}
\end{algorithm}
\subsection{Selective Reconstruction from Latent Space for Sparse Attention}

\label{sec:sparse_attention}

Sparse attention is defined by restricting the standard attention computation to a selected subset of keys and values.
Given a query matrix $\mathbf{Q}\in\mathbb{R}^{n\times d}$ and a full set of key and value matrices of length $s$, we identify a subset of salient indices
\begin{equation}
\mathcal{C} = \{i_1, i_2, \dots, i_{N_c}\} \subseteq \{1, \dots, s\}, \quad N_c \ll s,
\end{equation}
where $N_c$ denotes the number of selected positions on the token sequence.
The corresponding sub-matrices $\mathbf{K}_{\mathcal{C}} \in \mathbb{R}^{N_c\times n\times d}$ and $\mathbf{V}_{\mathcal{C}} \in \mathbb{R}^{N_c\times n\times d}$ are extracted based on the selected indices$\mathcal{C}$. Here, the salient indices $\mathcal{C}$ are obtained via critical token selection in latent space. 
The sparse attention output is then computed by restricting the standard attention to the selected tokens:
\begin{equation}
  \mathbf{p}_{\mathcal{C}}
    = \operatorname{softmax}\!\left(
        \frac{\mathbf{Q}\mathbf{K}_{\mathcal{C}}^{\top}}{\sqrt{d}}
      \right),
  \qquad
  \mathbf{y} = \mathbf{p}_{\mathcal{C}}\mathbf{V}_{\mathcal{C}}.
  \label{eq:two-step-attn}
  \end{equation}
This method works well since the softmax operation normalizes the input distribution so that the output sums to 1 and only a few positions (or tokens) in $\mathcal{C}$ dominate with large values. The rest scores after softmax closes to 0.  The whole SALS algorithm is summarized in Algorithm \ref{alg:sparse}.

\subsection{Performance Discussion}
\label{sec:perf}

As the roofline model~\citep{ding2019roofline} suggests, the performance of a GPU is bounded by either computation or memory bandwidth. In attention computation, performance is commonly bounded by memory bandwidth~\cite{dao2023flashattention}, which is primarily consumed by KV cache data movement. For a context of $s$ tokens and feature dimension $d$, full attention needs to transfer $2sd$ elements of keys and values. On the other hand, for the SALS framework, critical token selection and sparse attention both involve data movement.  Critical token selection requires \(r^\star\!\ll\!d\) latent dimensions, so the query--key dot products read
 \(s r^\star\) elements. After selecting the top--\(k\) tokens, SALS only needs to access
 low‑-rank key and value caches, each of size \(k r\).   The second pass therefore moves \(2 k r\) elements.
% In modern GPUs, attention is primarily memory‑bandwidth‑bound; latency tracks the
% amount of data moved rather than the number of floating‑point operations.
% Hence the speed‑up of a variant can be approximated by the ratio of its memory
% traffic to that of full attention.
 % \begin{itemize}[leftmargin=1.6em]
 % \item \textbf{Full attention.}
 % For a context of \(S\) tokens and feature dimension \(D\), reading the key and
 % value buffers touches
 % \(2 S D\) elements.

 % \item \textbf{Latent scoring (Alg.\,line\,6).}
 % Using only the leading \(r^\star\!\ll\!D\) latent dimensions, the
 % query–-key dot products read
 % \(S r^\star\) elements.
 % \item \textbf{SALS attention.}
 % After selecting the top–\(k\) tokens, our attention only need to access
 % low‑rank key and value caches, each of size \(k r\).  
 % The second pass therefore moves \(2 k r\) elements.
 % \end{itemize}
Combining this two phases, the sparse variant moves
\(s r^{\star}+2 k r\) scalars, so its memory‑-bound speed‑-up is: 
% \[
%   \text{Speed‑up}
%     \;\approx\;
%     \frac{2 S D}{S r^{\star}+2 k r}
%     \;=\;
%     \frac{1}{d_{r^\star}/2\;+\;d_r\,k_s},
% \]
$\frac{2sd}{s r^{\star}+2 k r}=\frac{1}{d_{r^\star}/2\;+\;d_r\,k_s}$. Here
\(d_{r^\star}=r^{\star}/d,\;
  d_{r}=r/d,\;
  k_s=k/s\)
denote the latent‑-approximated-score ratio, the low-‑rank ratio, and the token sparsity,
respectively.
%The computation complexity is linearly reduced by the token sparsity and will not be the bottleneck in the long-contex scenario. 
We develop a \texttt{Triton} kernel that fuses token selection, reconstruction and RoPE rotation into a single pass. %More details will be disscussed in the appendix.
This fused reconstruct--RoPE kernel reduces memory traffic by $7.69 \times$ to $14.28\times$, depending on the chosen sparsity level and low-rank compression ratio, compared to the standard FlashAttention implementation.

\section{Experiment}

\subsection{Setup}
% 模型和任务：we evaluate SALS on Llama-7b-chat和Mistral-7b-v0.2, which覆盖了MHA和GQA两种常见多头attention结构。为了更好的评估我们方法的准确率，我们选择了GSM8K和COQA数据集作为标准任务评估，和16个英文longbench中的英文数据集作为长序列任务评估。
% 方法对比：为了更好的体现SALS方法的优秀性，我们分别与相关KVcache压缩领域和Token sparse领域的state-of-the-art(SOTA)进行了对比。在相关KVcache压缩方法中，我们选择了PALU、KIVI分别作为使用低秩和量化两种代表性方法的KVcache压缩方法的SOTA进行对比；在Token Sparse方法中，我们选择了StreamingLLM、H2O、Quest、Double Sparse、Hshare和Loki进行对比。对于KVcache压缩方法中，我们会与PALU不同压缩率并附加量化后的结果进行比较，KIVI则对比4bit和2bit下的结果。对于Token Sparse方法，为了公平对比，我们只在decoding阶段进行稀疏，并设置相同稀疏率。
% 压缩设置：我们使用C4数据集随机采样512个sample，每个sample长4096，用于离线获取Latent投影矩阵。我们对Key进行了25%和12.5%的多头联合压缩，同时，由于Value在attention中占据着重要的角色，且具有几乎为满的秩，因此对于Value，我们在channel维度进行Group量化，压缩比率与key的低秩压缩比率相同。同时，我们follow kivi，采取了高精+低精混合设置，在recent w个token下只进行50%压缩，对其余token进行对应设置的压缩。recent window高精部分与稀疏部分设置保持相同，即如果稀疏固定选取recent w个token，则压缩部分也会保持recnet w个token为高精度。

% Memory Access Ratio：由于我们的方法结合了稀疏和压缩，为了公平对比，我们会给出不同方法与baseline相比在整个attention过程中的访存量比例。相比于稀疏率，这种方法能够更好地衡量不同方法的最大加速比例。
% \vspace{-10pt}
\begin{wraptable}{!br}{8cm}
  \centering
  \caption{Evaluation on GSM8K and CoQA datasets for LLaMA2--7B--chat}
  \resizebox{1\linewidth}{!}{
\begin{tabular}{l c c c c }
  \toprule
   Method & GSM8K (strict/flexible)$\uparrow$ & CoQA$\uparrow$ & \makecell[c]{Memory\\Access$\downarrow$} & \makecell[c]{Comp.\\ratio$\downarrow$} \\  
   \midrule
    baseline         & 0.2335 / 0.2335  & 0.5997 & 1.00  & 1.00   \\
    \midrule
      KIVI-4           & 0.2297 / 0.2305  & 0.5992 & 0.31  & 0.31  \\
  KIVI-2           & 0.2047 / 0.2047  & 0.6010 & 0.19  & 0.19  \\
  \midrule
   Palu-30\%(3bit)  & 0.1713/0.1759  & 0.5938 & 0.14  & 0.14   \\  
   Palu-50\%(3bit)  & 0.0614/0.0879  & 0.5560 & 0.09  & \textbf{0.09}   \\
   \midrule
  \textbf{SALS-25\%} & \textbf{0.2312 / 0.2343} & 0.5975 & 0.13 & 0.28  \\   
   \textbf{SALS-12.5\%} & 0.2176 / 0.2229 & \textbf{0.6070} & \textbf{0.07} & 0.15  \\
  \bottomrule
\end{tabular}
}
\vspace{-10pt}
  \label{tab:std}
\end{wraptable}
\textbf{Models and tasks.}
We evaluate our SALS framework on two mainstream 7B chat models:
LLaMA2--7B--Chat~\citep{touvron2023llama}, that employs multi--head attention (MHA),
and Mistral--7B--v0.2~\citep{jiang2024mixtral}, that uses grouped--query attention (GQA).
To assess accuracy, we report results on the reasoning benchmark GSM8K~\citep{gsm8k},
the conversational benchmark CoQA~\citep{reddy2019coqa}, and the 16 English subsets of
LongBench~\citep{bai2023longbench} that probe long--context understanding. All experiments are conducted on a machine with Xeon(R) Platinum 8336C CPU, one GPU (ampere architecture), and 128G RAM.

\textbf{Baselines.}
We compare SALS with several state of the art baseline in two directions.
For KV--cache compression we select Palu (low--rank
projection)~\citep{chang2024palu} and KIVI (quantization)~\citep{liu2024kivi}.
For token--sparse decoding we include
Double~Sparse~\citep{yang2024post}, Hshare~\citep{wuhshare}, and Loki~\citep{singhania2024loki}.
For the KV--cache compression baselines,
we evaluate Palu at every reported compression ratio,
including its variants that apply the optional quantisation step. For
KIVI we evaluate both official settings: 4 bit and 2 bit.
For the token--sparse baselines, sparsity is introduced only
during the decoding phase, and we enforce the same overall sparsity
ratio as Hshare to ensure a fair comparison.

\textbf{Compression setup and calibration.}
To obtain the latent projection matrix, we randomly sample 512
sequences of length~4096 from the \textsc{C4} corpus \cite{habernal2016c4corpus} and compute it
offline.  We apply multi\mbox{-}head \emph{joint} compression to the
key cache at two ratios: $d_r=25\%$ and $d_r=12.5\%$. For latent scoring, we simply set $r^\star=0.5r$ for all models. 
Since value tensors are almost full rank and play a pivotal role in
attention, we forgo low\mbox{-}rank projection for them and instead
perform \emph{channel\mbox{-}wise group quantisation} that mirrors the
key\mbox{-}cache setting (4\,bit at 25\% and 2\,bit at 12.5\%). 

Following KIVI, we adopt a mixed high--precision / low--precision scheme:
tokens in the most recent window are compressed by only~50\%,
whereas all preceding tokens are compressed according to the target ratio of the
experiment.
The high--precision window is aligned with the sparsity window:
when the sparse mechanism always selects the most recent \(w\) tokens,
the compression stage likewise keeps those same \(w\) tokens in high precision.
% 同时，由于sec3.2的观察，我们对所有模型都跳过0,1,31的稀疏部分，以获得更准确的压缩和稀疏结果。
Based on the observations in Figure~\ref{fig:overlap}, we skip the sparsification for layers~0, 1, and~31 across all models to ensure more accurate compression and sparsity results.

% \paragraph{Memory--Access Ratio (MAR).}
% Because \textsc{SALS} combines token sparsity with KV--cache
% compression, a plain ``sparsity ratio'' or ``compression ratio'' is inadequate for gauging its
% advantage.  We therefore report the \emph{Memory--Access Ratio}
% (\textbf{MAR}), defined as the total number of bytes read or written by
% the attention kernel under a given method divided by that of the
% full--precision, dense baseline.  MAR captures \emph{both} the reduction
% in tokens processed (sparsity) and the reduction in bytes per token
% (low--rank or quantised representations).

% MAR is computed over the entire attention pass—including reads of
% $\mathbf Q$, $\mathbf K$, $\mathbf V$ and writes of intermediate
% projections—so it approximates the maximum speed--up achievable on
% memory--bandwidth--bound hardware.  All competing methods (pure
% sparsity, pure compression, and our hybrid \textsc{SALS}) are evaluated
% under this single metric, providing a fair comparison of memory traffic
% and potential acceleration.
\begin{table}[!t]
  \centering
  \caption{Evaluation on LongBench datasets}
  \label{tab:llama7b_tasks_memory}
  \resizebox{1\linewidth}{!}{
  \begin{tabular}{llcccccccc}
    \toprule
    Model & Method & \makecell[c]{Single-\\ QA} & \makecell[c]{Multi-\\QA} & \makecell[c]{Summari-\\zation} & Few-shot & Synthetic & Code & Avg & \makecell[c]{Memory\\Access$\downarrow$} \\
    \midrule
    \multirow{7}{*}{LLaMA2-7B-chat}
    & baseline               & 24.50 & 22.18 & 23.64 & 62.80 & 6.50 & 56.29 & 32.65 & 1.00 \\
    & KIVI-4bit              & \textbf{24.76} & 22.22 & \textbf{23.40} & 62.67 & 6.75 & \textbf{56.42} & \textbf{32.70} & 0.31 \\
    & Palu-30\% (3bit)       & 22.36 & 22.01 & 22.66 & 60.39 & \textbf{7.50} & 50.53 & 30.91 & 0.13 \\
    & SALS-25\%              & 24.37 & \textbf{22.50} & \textbf{23.40} & \textbf{63.01} & 6.50 & 53.80 & 32.26 & \textbf{0.11} \\
    \cmidrule(lr){2-10}
    & KIVI-2bit              & 23.11 & 21.40 & 22.67 & 62.81 & \textbf{5.75} & \textbf{55.84} & 31.93 & 0.19 \\
    & Palu-50\% (3bit)       & 20.48 & 22.18 & 21.44 & 54.62 & \textbf{5.75} & 37.07 & 26.92 & 0.09 \\
    & SALS-12.5\%            & \textbf{24.81} & \textbf{22.52} & \textbf{23.08} & \textbf{62.84} & 5.00 & 53.58 & \textbf{31.97} & \textbf{0.06} \\
    \midrule
    \multirow{7}{*}{Mistral-7B-v0.2}
    & baseline                      & 36.43 & 29.65 & 28.06 & 66.72 & 44.87 & 52.96 & 43.12 & 1.00 \\
    & KIVI-4bit                     & 36.34 & 29.85 & \textbf{28.09} & \textbf{66.81} & \textbf{44.67} & \textbf{52.90} & \textbf{43.11} & 0.31 \\
    & Palu-30\% (3bit)             & 29.48 & 36.40 & 27.20 & 65.73 & 53.19 & 40.77 & 34.74 & 0.13 \\
    & SALS-25\%        & \textbf{36.61} & \textbf{29.92} & 27.57 & 66.63 & 43.54 & 52.49 & 42.79 & \textbf{0.11} \\
    \cmidrule(lr){2-10}
    & KIVI-2bit                    & \textbf{35.34} & 28.63 & \textbf{27.72} & 66.77 & \textbf{39.68} & \textbf{52.63} & \textbf{41.80} & 0.19 \\
    & Palu-50\% (3bit)            & 26.73 & \textbf{32.72} & 25.73 & 63.25 & 18.57 & 44.43 & 35.71 & 0.09 \\
    & SALS-12.5\%      & 35.18 & 29.88 & 27.18 & \textbf{66.98} & 35.32 & 51.39 & 40.99 & \textbf{0.06} \\
    \bottomrule
  \end{tabular}
  }
\end{table}

\subsection{KV‑cache Compression Comparison}
% 实现细节：由于不同模型、数据集平均长度不同，因此为了公平对比，我们对不同数据集采取了不同稀疏比率的设置。对于GSM8K和COQA，我们的方法保留recent w=128个token固定被选取，其余部分在decoding时一直保持1/4稀疏度。对于long bench，由于模型最大支持长度不同(4k和32k)，我们为Llama2-7b-chat和Mistral-7b-v0.2分别选取了Nc=512个token和Nc=1024个token，使得这两个模型在Longbench数据集上平均稀疏度都保持在1/8。为了与Hshare保持一直，我们为Llama2-7b-chat选择x=16 sink tokens, y=432 critical tokens and z=64 recent tokens.
\textbf{Implementation details.}
 \label{sec:exp.compress}
% To ensure a fair comparison, we tune the sparsity ratio for each dataset,
% as the two models and datasets differ in their maximum context length and
% average sequence length.
For the dataset GSM8K and CoQA, we always keep the most recent
\(w = 128\) tokens and decode the remaining context at a fixed sparsity
of \(1/4\).
For the dataset LongBench,  LLaMA2--7B--Chat supports a 4\,k context window,
whereas Mistral--7B--v0.2 supports 32\,k.
To equalise the average sparsity at \(1/8\) on this benchmark,
we retain \(N_c = 512\) tokens for LLaMA2--7B--Chat and
\(N_c = 1024\) tokens for Mistral--7B--v0.2. For LLaMA2--7B--Chat we follow the Hshare~\citep{wuhshare} configuration
(\(x = 16\) sink tokens, \(y = 432\) critical tokens,
\(z = 64\) recent tokens) and simply double each count for
Mistral--7B--v0.2.

\textbf{Results.} Tables~\ref{tab:std} shows the evaluation on the benchmark (GSM8K, CoQA) on LLaMA2--7B--chat model. We can see that although Palu achieves the highest compression rate, the acurracy drop is non-negligible, especillay for the GSM8K dataset. SALS provides two setting results with 25\% KV cache compression rate (SALS--25\%) and 12.5\% compression rate (SALS--12.5\%). SALS--25\% achieves the best accuracy compared to KIVI and Palu, with neligible loss compared to the baseline. Tables~\ref{tab:llama7b_tasks_memory} compares task-level performance across six LongBench categories, alongside normalized memory access costs. The first three columns reflect core reasoning tasks, where SALS achieves consistently strong performance. Notably, SALS-12.5\% retains competitive accuracy while reducing memory access to just 6\% of the baseline. This indicates that SALS preserves informative tokens more effectively. The memory-access savings translate into practical latency improvements, as fewer KV memory lookups reduce attention computation overhead.

% Tables~\ref{tab:std} and~\ref{tab:llama7b_tasks_memory} paint a coherent
% picture.  Across the short-‐context benchmarks (GSM8K, CoQA) and the
% long‐context ones (LongBench), \textbf{SALS} keeps baseline‐level
% accuracy even at the tight 12.5\,\% KV cache compression.  It matches the best
% quantisation baseline (\textbf{KIVI}) while issuing only about
% one‐-third of the memory accesses, and it clearly surpasses the
% low‐-rank‐-only \textbf{PALU} despite operating under an even stricter
% memory budget.  The pattern holds for both Llama‐2‐-7B‐-Chat and
% Mistral-‐7B‐-v0.2, confirming that combining token sparsity with
% latent‐-space KV‐cache compression offers the most favourable
% accuracy–memory trade-off across sequence lengths.

\begin{table}[!t]
  \centering
  \caption{Comparison of Token Sparse Methods on \texttt{LLaMA2-7B-chat} Using LongBench Tasks}
  \label{tab:sparse_comp}
  \resizebox{0.9\linewidth}{!}{
  \begin{tabular}{lcccccccc}
    \toprule
    Method & \makecell[c]{Single-\\QA} & \makecell[c]{Multi-\\QA} & \makecell[c]{Summari-\\zation} & Few-shot & Synthetic & Code & Avg & \makecell[c]{Memory\\Access$\downarrow$} \\
    \midrule
    
    baseline       & 24.50 & 22.18 & 23.64 & 62.80 & 6.50 & 56.29 & 32.65 & 1.00 \\
    \midrule
    Double Sparse  & 24.78 & \textbf{22.72} & \textbf{24.70} & 61.84 & 4.17 & 51.66 & 31.64 & 0.16 \\
    HShare         & 24.59 & 22.18 & 24.54 & 61.69 & 4.74 & 53.26 & 31.83 & 0.14 \\
    Loki           & 24.57 & 18.09 & 24.37 & \textbf{63.43} & 4.75 & 56.49 & 31.95 & 0.19 \\
    SALS-25\%      & 24.37 & 22.50 & 23.40 & 63.01 & \textbf{6.50} & \textbf{53.80} & \textbf{32.26} & 0.11 \\
    SALS-12.5\%    & \textbf{24.81} & 22.52 & 23.08 & \ 62.84 & 5.00 & 53.58 & 31.97& \textbf{0.06} \\
    \bottomrule
  \end{tabular} 
  }
\end{table}

\subsection{Token‑-Sparse Comparison}
Since SALS incorporates a sparsification stage, we also compare it with state of the art sparse decoders, such as Double Sparse, HShare and Loki in LongBench datasets.

\textbf{Implementation details.}
We use the same sparse setting as Sec~\ref{sec:exp.compress} for all methods, which have \(x = 16\) sink tokens, \(y = 432\) critical tokens and
\(z = 64\) recent tokens, achieving sparsity on LLaMA2--7b--chat at \(1/8\).

\textbf{Result.}
% 由于引入了稀疏方法，我们同样与稀疏领域的方法进行了对比。如table 4所示，在longbench数据集上，SALS在25%压缩率下通过只有baseline11%的访存率，达到了与baseline几乎相同的平均分。而在12.5%压缩率下，更是通过仅5.78%的访存率，获得了比其他稀疏方法更高的准确率。这说明，latent scoring结构与其他打分结构相比更加准确，可以在具有kvcache压缩的同时保持精度。
 Table~\ref{tab:sparse_comp}
shows that, on LongBench, \textsc{SALS} with 25\% key compression
matches the average accuracy of the baseline while issuing only about 
$11\%$ of the baseline’s memory traffic.  When the compression
is tightened to 12.5\%, the Memory--Access Ratio drops to just $5.8\%$,
yet \textsc{SALS} still outperforms all competing sparse methods in
accuracy.  These results suggest that the latent‐-space scoring
mechanism selects critical tokens more accurately than existing
heuristics, enabling KV-‐cache compression and token sparsity to coexist
without loss of accuracy.
\begin{table}[!t]
  \centering
  \caption{Performance comparison of baseline and SALS methods on the RULER dataset with Llama3.1-8B-Instruct (4k sequence length). \label{tab:niah_qa}}
  \resizebox{0.9\linewidth}{!}{
  \begin{tabular}{l|cccccccccc}
  \toprule
  Method & avg & S1 & S2 & MK1 & MK2 & MV & MQ & FEW & QA1 & QA2 \\
  \midrule
  Baseline & 81.60 & 99.6 & 96.0 & 94.6 & 73.6 & 93.85 & 96.9 & 68.47 & 69.6 & 41.8 \\
  SALS-25\% & 80.81 & 99.6 & 95.2 & 94.2 & 65.8 & 93.2 & 96.4 & 71.53 & 70.2 & 41.2 \\
  SALS-12.5\% & 75.86 & 97.4 & 93.8 & 92.8 & 42.2 & 84.05 & 93.05 & 72.53 & 67.8 & 39.14 \\
  \bottomrule
  \end{tabular}
  }
\end{table}
\subsection{RULER Benchmark}

\textbf{Dataset.} 
\textit{RULER}~\cite{hsieh2024rulerwhatsrealcontext} is a recently proposed long-context benchmark designed to evaluate the reasoning, retrieval, and compositional understanding capabilities of large language models across a wide range of input lengths and retrieval patterns. 
It decomposes the evaluation into fine-grained retrieval types, including \textit{single-key}, \textit{multi-key}, \textit{multi-value}, and \textit{multi-query} tasks, along with few-shot and question-answering (QA) subtasks. 
Compared with LongBench, RULER provides a more detailed breakdown of retrieval behaviors, making it well-suited for assessing token selection accuracy and KV-cache compression performance.

\textbf{Implementation details.} 
The experimental setup follows the same configuration as in Sec.~\ref{sec:exp.compress}. 
We evaluate all methods on a 128k-token context, applying an 8$\times$ sparsity ratio, i.e., selecting 16k active tokens during inference. 
The retained-to-pruned token ratio is identical to that used in the LongBench experiments, ensuring consistent compression levels and comparable evaluation metrics. 
All experiments are conducted on the \textbf{LLaMA~3.1--8B--Instruct} model with a 4k sequence length.

\textbf{Result.} 
Table~\ref{tab:niah_qa} summarizes the performance of \textsc{SALS} on the RULER dataset. 
At a 25\% compression ratio, \textsc{SALS} achieves nearly identical average accuracy to the baseline (80.81 vs.\ 81.60), demonstrating its high fidelity in token selection and KV-cache reconstruction. 
Performance remains consistent across most retrieval subtasks, including \textit{NIAH-Single-1/2}, \textit{Multi-Value}, and \textit{Multi-Query}, indicating that latent-space sparsification preserves key contextual semantics even under substantial cache reduction. 

When the compression is further tightened to 12.5\%, accuracy degradation mainly occurs in \textit{NIAH-Single-1} and \textit{Multi-Key-2}, where the retrieval dependency is strongest. 
Nevertheless, the model still achieves competitive overall performance (75.86 average) and stable scores on \textit{Few-shot} and \textit{QA} tasks. 
These results suggest that the proposed latent-space attention mechanism can maintain accurate token importance estimation even under extreme sparsity, enabling effective KV-cache compression with minimal accuracy loss across retrieval-oriented benchmarks.

\begin{table}[!t]
  \centering
  \caption{Attention Operator Latency (ms) across Methods and Input Configurations \label{tab:lat_op} }
  \resizebox{0.95\linewidth}{!}{
\begin{tabular}{l|cccccc}
  \toprule
  Config (ms) & Flash-attn \cite{dao2023flashattention} & Loki \cite{singhania2024loki} & Double-sparse \cite{yang2024post} & Hshare \cite{wuhshare} & SALS-25\% & SALS-12.5\% \\
  \midrule
  bs=8, 1k  & 0.230 & 0.248 & 0.138 & 0.134 & $0.409\pm0.093$ & $0.357\pm0.051$ \\
  bs=8, 2k  & 0.830 & 0.464 & 0.237 & 0.430 & $0.430\pm0.018$ & $0.359\pm0.012$ \\
  bs=8, 4k  & 1.630 & 1.102 & 0.724 & 0.576 & $0.530\pm0.008$ & $0.439\pm0.016$ \\
  \midrule
  bs=16, 1k & 0.440 & 0.452 & 0.223 & 0.134 & $0.415\pm0.008$ & $0.360\pm0.011$ \\
  bs=16, 2k & 1.630 & 0.864 & 0.434 & 0.246 & $0.552\pm0.040$ & $0.444\pm0.019$ \\
  bs=16, 4k & 3.230 & 2.101 & 1.319 & 1.067 & $0.757\pm0.026$ & $0.565\pm0.013$ \\
  \bottomrule
\end{tabular}

  }
\end{table}

\subsection{Efficiency Evaluation}
% 推理细节：Follow Hshare，我们使用pytorch作为速度测试的后端，并使用triton实现的融合kernel作为我们速度测试的基准。在这部分，我们会测试self-attention延迟和端到端加速。在self-attention延迟测试部分，我们选择FlashAttention2作为基线进行对比。在端到端加速部分，我们选择GPT-Fast作为基线进行对比。我们选择batch size=8，16和seqlenth分别=1k、2k、4k作为基础设置，同时令所有方法的稀疏度为1/8.
\textbf{Inference details.} Following Hshare~\citep{wuhshare}, all speed tests are performed on the PyTorch backend,
with the Triton--based fused kernel as discussed in  Sec~\ref{sec:perf}.  We evaluate two aspects: \emph{(i)} self--attention
latency and \emph{(ii)} end--to--end generation speed--up.
Self--attention latency is compared against FlashAttention--v2~\citep{dao2023flashattention},
whereas end--to--end throughput is benchmarked against
GPT--Fast~\cite{gptfast}.  Experiments use batch sizes of $8$ and $16$ and
sequence lengths of $1$\,k, $2$\,k, and $4$\,k tokens; the sparsity
ratio is fixed at $1/8$ for all methods.

\textbf{Result.}
% tab 5和表6展示了我们与不同方法在latency上的对比。由于引入了token稀疏，我们的方法在attention operator上达到了xxx倍的加速，端到端在bs=16, seq_len=4k时也有xxx倍的加速，显著高于量化和低秩压缩方法。与稀疏方法相比，我们的方法显著降低了访存量，但同时引入了新的计算量，导致加速效果并不显著，但仍然有xxx倍的attention operator加速和xxx倍的端到端加速。因此，我们的方法在速度和准确度上，都获得了有竞争力的结果，表明我们的方法的有效性。
Tables~\ref{tab:lat_op} reports our attention latency
comparisons. Mean and varaince are reported using 1000 repetions. Thanks to the added token sparsity, SALS
accelerates the stand‑alone attention operator by
\textbf{$7,46\!\times$} speed‑up for batch size (bs) =8 and 4k sequence length (4k). Similar performance speed is also observed when batch size is 16.  SALS will introduce some overhead for short sequences (e.g. 1k) but for longer sequences the speed-up is significantly better than the state-of-the-art sparse algorithm. As discussed in Section \ref{sec:perf}, reducing memory access will improve the attention performance.  

\begin{wraptable}{!br}{8cm}
% \vspace{-5pt}
\caption{Long Context End-to-End Performance Throughput Comparison (token/second) \label{tab:lat_e2e}}
\resizebox{1\linewidth}{!}{
\begin{tabular}{c|c|c|c|c}
\toprule
Bsz & Seq (k) & GPT-Fast & SALS-25\% & SALS-12.5\% \\
\midrule
8   & 4       & 118    & $154.1\pm7.1$    & $163.5\pm5.1$  \\
8   & 8       & 70.6    & $128.6\pm5.1$    & $150.1\pm3.3$      \\
8   & 16      & 38.3    & $102.5\pm0.3$    & $122.7\pm7.7$     \\
8   & 32      & 19.8    & $67.97\pm0.8$     & $89.47\pm1.4$       \\
4   & 64      & 9.2     & $32.92\pm0.3$     & $42.96\pm0.9$       \\
\bottomrule
\end{tabular}

}
\end{wraptable}
% \vspace{-10pt}

% Table ~\ref{tab:lat_e2e} shows the SALS attention and end-to-end performance on long sequences from 8K to 128k.  
% \textcolor{red}{Compared with state‑of‑the‑art sparse decoders, SALS reduces memory
% traffic substantially, albeit at the cost of extra computation; the net
% gains remain a notable \textbf{$X'\!\times$} on the attention kernel and
% \textbf{$Y'\!\times$} end‑to‑end.  Coupled with its strong accuracy,
% these results confirm that combining latent‑space KV‑cache compression
% with token sparsification offers an effective trade‑off between speed
% and quality.}

Table ~\ref{tab:lat_e2e} shows the end-to-end throughput speed-up of SALS comparing to the GPT-Fast results.  Note that batch size is chosen 4 for sequence 64k to avoid GPU out of memory. Mean and varaince are reported using 50 repetions.  We observe that when sequence is 8k, the speed-up of SALS-12.5\% is \textbf{$2.13\!\times$} although the sparsity ratio is 1/8. This is due to the overhead of reconstructing the selected token. When the sequence is set 32k, the speed-up of SALS-12.5\% is increasing and reaches \textbf{$4.57\!\times$}.
% \textcolor{red}{
% We also observe a relatively large througput variance in the short sequence, which may be due to the GPU L2 cache behaviour and will be investigated in the future works. }
Coupled with the strong accuracy of SALS, these results confirm that combining latent‑space KV‑cache compression
with token sparsification offers an effective trade‑off between speed and accuracy.  

\section{Conclusion}

% We investigated why existing low–rank KV‐cache compressors degrade after RoPE and traced the problem to the continuous rotation of positional subspaces.  
% We also showed that top‐\(k\) scoring performed pre--RoPE remains highly faithful, enabling the combination of low‐rank projection with token sparsity.
% These insights led to SALS, a hybrid approach that unifies latent‐space compression and sparse decoding.  
% SALS cuts memory traffic by an order of magnitude while preserving baseline‐level accuracy—often outperforming methods that apply sparsity alone without KV‐cache compression.  
% To avoid redundant data movement, we fuse key operations into the FlashAttention kernel, achieving up to XXX end‐to‐end speed‑ups over GPT‑-Fast in our experiments.

This paper investigates the KV cache compression based on the low-rank characteristics in the hidden dimension.  We further discuss the increasing variance of keys due to applying rotary position embedding and propose to use the pre-RoPE keys to select the critical tokens. We observe that after the key vectors are transformed into the latent space, they largely maintain their representation across most layers and thereby the salient tokens are selected with high accuracy. 
 Based on these insights, we propose the Sparse Attention in Latent Space (SALS) framework. 
The SALS framwork achieves compressed KV cache to the latent space, selects the critical token in the latent space with significantly less computation load and performs sparse attention on the selected tokens. 
 By reconstructing only a small subset of important tokens, it avoids the overhead of full KV cache reconstruction. 
Experimental results demonstrate that SALS achieves SOTA performance by maintaining competitive accuracy under different settings and achieving 6.4-fold KV cache compression and 5.7-fold speed-up in attention compared to FlashAttention2 on 4K sequences. For the end-to-end throughput performance, we achieve 1.4-fold and 4.5-fold improvement compared to GPT-fast on 4k and 32K sequences respectively.

\clearpage

\bibliographystyle{plainnat}
\bibliography{main}

\begin{thebibliography}{33}
\providecommand{\natexlab}[1]{#1}
\providecommand{\url}[1]{\texttt{#1}}
\expandafter\ifx\csname urlstyle\endcsname\relax
  \providecommand{\doi}[1]{doi: #1}\else
  \providecommand{\doi}{doi: \begingroup \urlstyle{rm}\Url}\fi

\bibitem[Anthropic(2023)]{anthropic2023claude}
Anthropic.
\newblock Claude.
\newblock Large language model, May 2023.
\newblock URL \url{https://www.anthropic.com/claude}.
\newblock Accessed May 5, 2025.

\bibitem[Bai et~al.(2023)Bai, Lv, Zhang, Lyu, Tang, Huang, Du, Liu, Zeng, Hou, et~al.]{bai2023longbench}
Yushi Bai, Xin Lv, Jiajie Zhang, Hongchang Lyu, Jiankai Tang, Zhidian Huang, Zhengxiao Du, Xiao Liu, Aohan Zeng, Lei Hou, et~al.
\newblock Longbench: A bilingual, multitask benchmark for long context understanding.
\newblock \emph{arXiv preprint arXiv:2308.14508}, 2023.

\bibitem[Beltagy et~al.(2020)Beltagy, Peters, and Cohan]{beltagy2020longformer}
Iz~Beltagy, Matthew~E Peters, and Arman Cohan.
\newblock Longformer: The long-document transformer.
\newblock \emph{arXiv preprint arXiv:2004.05150}, 2020.

\bibitem[Chang et~al.(2024)Chang, Lin, Lin, Chen, Hu, Wang, Huang, Ceze, Abdelfattah, and Wu]{chang2024palu}
Chi-Chih Chang, Wei-Cheng Lin, Chien-Yu Lin, Chong-Yan Chen, Yu-Fang Hu, Pei-Shuo Wang, Ning-Chi Huang, Luis Ceze, Mohamed~S Abdelfattah, and Kai-Chiang Wu.
\newblock Palu: Compressing kv-cache with low-rank projection.
\newblock \emph{arXiv preprint arXiv:2407.21118}, 2024.

\bibitem[Child et~al.(2019)Child, Gray, Radford, and Sutskever]{child2019generating}
Rewon Child, Scott Gray, Alec Radford, and Ilya Sutskever.
\newblock Generating long sequences with sparse transformers.
\newblock \emph{arXiv preprint arXiv:1904.10509}, 2019.

\bibitem[Cobbe et~al.(2021)Cobbe, Kosaraju, Bavarian, Chen, Jun, Kaiser, Plappert, Tworek, Hilton, Nakano, et~al.]{gsm8k}
Karl Cobbe, Vineet Kosaraju, Mohammad Bavarian, Mark Chen, Heewoo Jun, Lukasz Kaiser, Matthias Plappert, Jerry Tworek, Jacob Hilton, Reiichiro Nakano, et~al.
\newblock Training verifiers to solve math word problems.
\newblock \emph{arXiv preprint arXiv:2110.14168}, 2021.

\bibitem[Dao(2023)]{dao2023flashattention}
Tri Dao.
\newblock Flashattention-2: Faster attention with better parallelism and work partitioning.
\newblock \emph{arXiv preprint arXiv:2307.08691}, 2023.

\bibitem[Ding and Williams(2019)]{ding2019roofline}
Nan Ding and Samuel Williams.
\newblock \emph{An instruction roofline model for gpus}.
\newblock IEEE, 2019.

\bibitem[Habernal et~al.(2016)Habernal, Zayed, and Gurevych]{habernal2016c4corpus}
Ivan Habernal, Omnia Zayed, and Iryna Gurevych.
\newblock C4corpus: Multilingual web-size corpus with free license.
\newblock In \emph{Proceedings of the Tenth International Conference on Language Resources and Evaluation (LREC'16)}, pages 914--922, 2016.

\bibitem[Hsieh et~al.(2024)Hsieh, Sun, Kriman, Acharya, Rekesh, Jia, Zhang, and Ginsburg]{hsieh2024rulerwhatsrealcontext}
Cheng-Ping Hsieh, Simeng Sun, Samuel Kriman, Shantanu Acharya, Dima Rekesh, Fei Jia, Yang Zhang, and Boris Ginsburg.
\newblock Ruler: What's the real context size of your long-context language models?, 2024.
\newblock URL \url{https://arxiv.org/abs/2404.06654}.

\bibitem[Jiang et~al.(2024)Jiang, Sablayrolles, Roux, Mensch, Savary, Bamford, Chaplot, Casas, Hanna, Bressand, et~al.]{jiang2024mixtral}
Albert~Q Jiang, Alexandre Sablayrolles, Antoine Roux, Arthur Mensch, Blanche Savary, Chris Bamford, Devendra~Singh Chaplot, Diego de~las Casas, Emma~Bou Hanna, Florian Bressand, et~al.
\newblock Mixtral of experts.
\newblock \emph{arXiv preprint arXiv:2401.04088}, 2024.

\bibitem[Kitaev et~al.(2020)Kitaev, Kaiser, and Levskaya]{kitaev2020reformer}
Nikita Kitaev, {\L}ukasz Kaiser, and Anselm Levskaya.
\newblock Reformer: The efficient transformer.
\newblock \emph{arXiv preprint arXiv:2001.04451}, 2020.

\bibitem[Labs(2023)]{gptfast}
PyTorch Labs.
\newblock gpt-fast: Simple and efficient pytorch-native transformer text generation in <1000 loc of python, 2023.
\newblock URL \url{https://github.com/pytorch-labs/gpt-fast}.
\newblock Accessed: 2025-05-14.

\bibitem[Liu et~al.(2024)Liu, Yuan, Jin, Zhong, Xu, Braverman, Chen, and Hu]{liu2024kivi}
Zirui Liu, Jiayi Yuan, Hongye Jin, Shaochen Zhong, Zhaozhuo Xu, Vladimir Braverman, Beidi Chen, and Xia Hu.
\newblock Kivi: A tuning-free asymmetric 2bit quantization for kv cache.
\newblock \emph{arXiv preprint arXiv:2402.02750}, 2024.

\bibitem[OpenAI(2023)]{openai2023chatgpt}
OpenAI.
\newblock Chatgpt.
\newblock Large language model, May 2023.
\newblock URL \url{https://chat.openai.com/}.
\newblock Accessed May 5, 2025.

\bibitem[Reddy et~al.(2019)Reddy, Chen, and Manning]{reddy2019coqa}
Siva Reddy, Danqi Chen, and Christopher~D Manning.
\newblock Coqa: A conversational question answering challenge.
\newblock \emph{Transactions of the Association for Computational Linguistics}, 7:\penalty0 249--266, 2019.

\bibitem[Ribar et~al.(2023)Ribar, Chelombiev, Hudlass-Galley, Blake, Luschi, and Orr]{ribar2023sparq}
Luka Ribar, Ivan Chelombiev, Luke Hudlass-Galley, Charlie Blake, Carlo Luschi, and Douglas Orr.
\newblock Sparq attention: Bandwidth-efficient llm inference.
\newblock \emph{arXiv preprint arXiv:2312.04985}, 2023.

\bibitem[Saha et~al.(2024)Saha, Sagan, Srivastava, Goldsmith, and Pilanci]{saha2024compressing}
Rajarshi Saha, Naomi Sagan, Varun Srivastava, Andrea Goldsmith, and Mert Pilanci.
\newblock Compressing large language models using low rank and low precision decomposition.
\newblock \emph{Advances in Neural Information Processing Systems}, 37:\penalty0 88981--89018, 2024.

\bibitem[Saxena et~al.(2024)Saxena, Saha, Choudhary, and Roy]{saxena2024eigen}
Utkarsh Saxena, Gobinda Saha, Sakshi Choudhary, and Kaushik Roy.
\newblock Eigen attention: Attention in low-rank space for kv cache compression.
\newblock \emph{arXiv preprint arXiv:2408.05646}, 2024.

\bibitem[Singhania et~al.(2024)Singhania, Singh, He, Feizi, and Bhatele]{singhania2024loki}
Prajwal Singhania, Siddharth Singh, Shwai He, Soheil Feizi, and Abhinav Bhatele.
\newblock Loki: Low-rank keys for efficient sparse attention.
\newblock \emph{arXiv preprint arXiv:2406.02542}, 2024.

\bibitem[Su et~al.(2024)Su, Ahmed, Lu, Pan, Bo, and Liu]{su2024roformer}
Jianlin Su, Murtadha Ahmed, Yu~Lu, Shengfeng Pan, Wen Bo, and Yunfeng Liu.
\newblock Roformer: Enhanced transformer with rotary position embedding.
\newblock \emph{Neurocomputing}, 568:\penalty0 127063, 2024.

\bibitem[Tang et~al.(2024)Tang, Zhao, Zhu, Xiao, Kasikci, and Han]{tang2024quest}
Jiaming Tang, Yilong Zhao, Kan Zhu, Guangxuan Xiao, Baris Kasikci, and Song Han.
\newblock Quest: Query-aware sparsity for efficient long-context llm inference, 2024.

\bibitem[Touvron et~al.(2023)Touvron, Lavril, Izacard, Martinet, Lachaux, Lacroix, Rozi{\`e}re, Goyal, Hambro, Azhar, et~al.]{touvron2023llama}
Hugo Touvron, Thibaut Lavril, Gautier Izacard, Xavier Martinet, Marie-Anne Lachaux, Timoth{\'e}e Lacroix, Baptiste Rozi{\`e}re, Naman Goyal, Eric Hambro, Faisal Azhar, et~al.
\newblock Llama: Open and efficient foundation language models.
\newblock \emph{arXiv preprint arXiv:2302.13971}, 2023.

\bibitem[Wang et~al.(2024)Wang, Zheng, Wan, and Zhang]{wang2024svd}
Xin Wang, Yu~Zheng, Zhongwei Wan, and Mi~Zhang.
\newblock Svd-llm: Truncation-aware singular value decomposition for large language model compression.
\newblock \emph{arXiv preprint arXiv:2403.07378}, 2024.

\bibitem[Wu et~al.(2025)Wu, Li, Huang, Yi, Zhang, Yu, and Yan]{wuhshare}
Huaijin Wu, Lianqiang Li, Hantao Huang, Tu~Yi, Jihang Zhang, Minghui Yu, and Junchi Yan.
\newblock Hshare: Fast llm decoding by hierarchical key-value sharing.
\newblock In \emph{The Thirteenth International Conference on Learning Representations}, 2025.

\bibitem[Xiao et~al.(2023)Xiao, Tian, Chen, Han, and Lewis]{xiao2023efficient}
Guangxuan Xiao, Yuandong Tian, Beidi Chen, Song Han, and Mike Lewis.
\newblock Efficient streaming language models with attention sinks.
\newblock \emph{arXiv preprint arXiv:2309.17453}, 2023.

\bibitem[Yang et~al.(2025)Yang, Venkitesh, Talupuru, Lin, Cairuz, Blunsom, and Locatelli]{yang2025rope}
Bowen Yang, Bharat Venkitesh, Dwarak Talupuru, Hangyu Lin, David Cairuz, Phil Blunsom, and Acyr Locatelli.
\newblock Rope to nope and back again: A new hybrid attention strategy.
\newblock \emph{arXiv preprint arXiv:2501.18795}, 2025.

\bibitem[Yang et~al.(2024{\natexlab{a}})Yang, Han, Gao, Hu, Zhang, and Zhao]{yang2024pyramidinfer}
Dongjie Yang, XiaoDong Han, Yan Gao, Yao Hu, Shilin Zhang, and Hai Zhao.
\newblock Pyramidinfer: Pyramid kv cache compression for high-throughput llm inference.
\newblock \emph{arXiv preprint arXiv:2405.12532}, 2024{\natexlab{a}}.

\bibitem[Yang et~al.(2024{\natexlab{b}})Yang, Sheng, Gonzalez, Stoica, and Zheng]{yang2024post}
Shuo Yang, Ying Sheng, Joseph~E Gonzalez, Ion Stoica, and Lianmin Zheng.
\newblock Post-training sparse attention with double sparsity.
\newblock \emph{arXiv preprint arXiv:2408.07092}, 2024{\natexlab{b}}.

\bibitem[Yuan et~al.(2025)Yuan, Gao, Dai, Luo, Zhao, Zhang, Xie, Wei, Wang, Xiao, Wang, Ruan, Zhang, Liang, and Zeng]{yuan2025nativesparseattentionhardwarealigned}
Jingyang Yuan, Huazuo Gao, Damai Dai, Junyu Luo, Liang Zhao, Zhengyan Zhang, Zhenda Xie, Y.~X. Wei, Lean Wang, Zhiping Xiao, Yuqing Wang, Chong Ruan, Ming Zhang, Wenfeng Liang, and Wangding Zeng.
\newblock Native sparse attention: Hardware-aligned and natively trainable sparse attention, 2025.
\newblock URL \url{https://arxiv.org/abs/2502.11089}.

\bibitem[Yuan et~al.(2023)Yuan, Shang, Song, Wu, Yan, and Sun]{yuan2023asvd}
Zhihang Yuan, Yuzhang Shang, Yue Song, Qiang Wu, Yan Yan, and Guangyu Sun.
\newblock Asvd: Activation-aware singular value decomposition for compressing large language models.
\newblock \emph{arXiv preprint arXiv:2312.05821}, 2023.

\bibitem[Zhang et~al.(2024{\natexlab{a}})Zhang, Yi, Xu, and Shrivastava]{zhang2024kv}
Tianyi Zhang, Jonah Yi, Zhaozhuo Xu, and Anshumali Shrivastava.
\newblock Kv cache is 1 bit per channel: Efficient large language model inference with coupled quantization.
\newblock \emph{arXiv preprint arXiv:2405.03917}, 2024{\natexlab{a}}.

\bibitem[Zhang et~al.(2024{\natexlab{b}})Zhang, Sheng, Zhou, Chen, Zheng, Cai, Song, Tian, R{\'e}, Barrett, et~al.]{zhang2024h2o}
Zhenyu Zhang, Ying Sheng, Tianyi Zhou, Tianlong Chen, Lianmin Zheng, Ruisi Cai, Zhao Song, Yuandong Tian, Christopher R{\'e}, Clark Barrett, et~al.
\newblock H2o: Heavy-hitter oracle for efficient generative inference of large language models.
\newblock \emph{Advances in Neural Information Processing Systems}, 36, 2024{\natexlab{b}}.

\end{thebibliography}

\clearpage

\beginappendix

\section{Key Dimensianlity Analysis with RoPE}

\begin{figure}[htbp]
  \centering
  \vspace{-15pt}
  
  % 第一行：Eigenvalue 分布（上面）
  \begin{subfigure}[t]{0.48\textwidth}
    \centering
    \includegraphics[width=\linewidth]{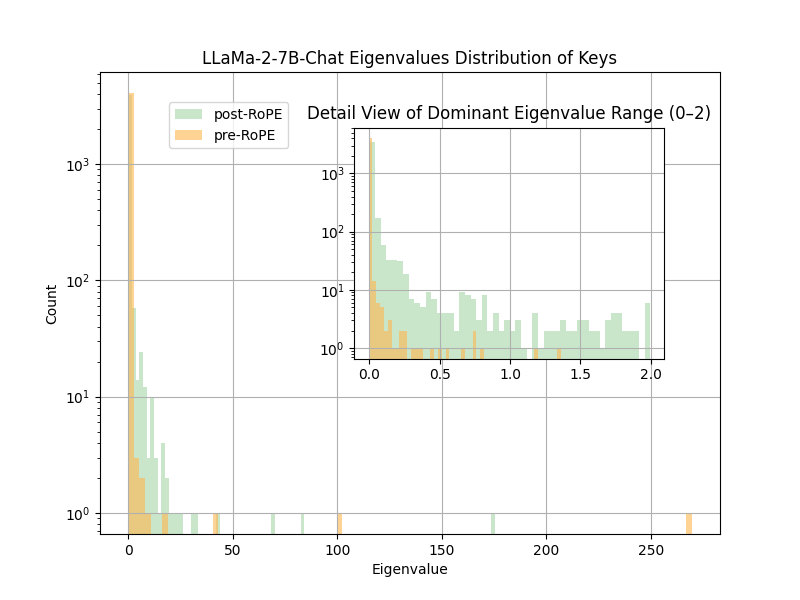}
    \caption{}
    \label{fig:llama_eigen}
  \end{subfigure}
  \hspace{-20pt}
  \begin{subfigure}[t]{0.48\textwidth}
    \centering
    \includegraphics[width=\linewidth]{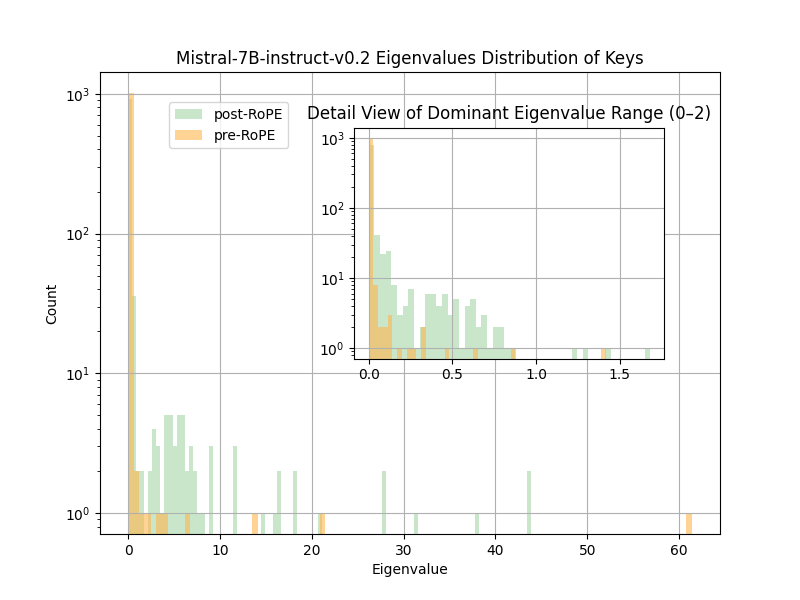}
    \caption{}
    \label{fig:mistral_eigen}
  \end{subfigure}

    \vspace{-2pt}

  % 第二行：Rank 保留情况（下面）
  \begin{subfigure}[t]{0.48\textwidth}
    \centering
    \includegraphics[width=\linewidth]{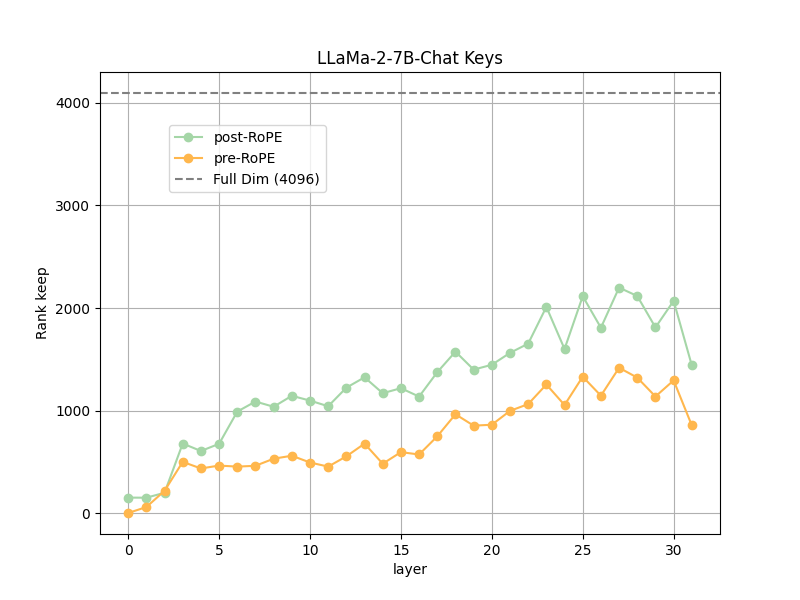}
    \caption{}
    \label{fig:llama_rank}
  \end{subfigure}
  \hspace{-20pt}
  \begin{subfigure}[t]{0.48\textwidth}
    \centering
    \includegraphics[width=\linewidth]{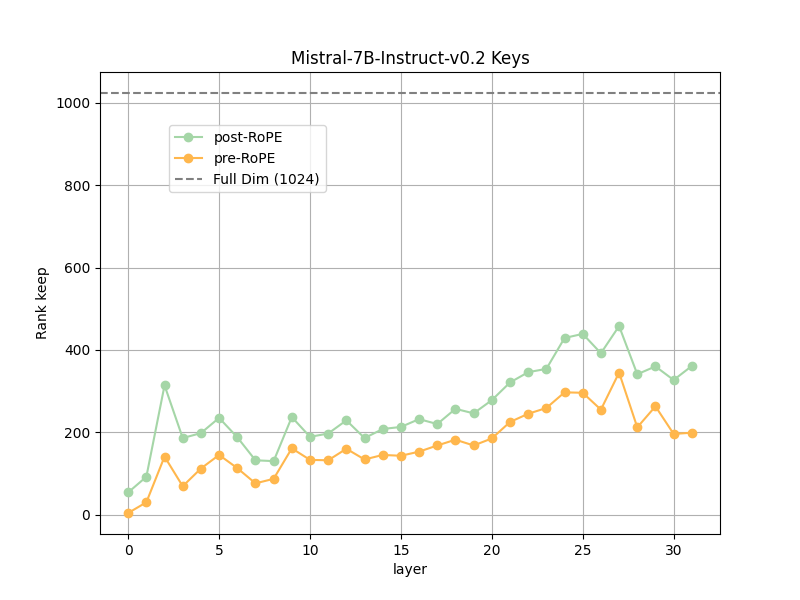}
    \caption{}
    \label{fig:mistral_rank}
    
  \end{subfigure}
\vspace{-2pt}
  \caption{(a)--(b): Eigenvalue distributions of key covariance matrices in LLaMA-2-7B-Chat and Mistral-7B-Instruct-v0.2 before and after applying Rotary Position Embedding (RoPE). 
(c)--(d): Number of principal components required to retain 90\% of the total variance across transformer layers, indicating changes in effective rank after RoPE.}
  \label{fig:eigen}
\end{figure}

% 在这个章节，我们进一步来在数值上分析rope对pca的影响。我们对于应用Rope前后的key states进行pca，分别为pre-RoPE和post-RoPE。具体来说，我们对key states计算covariance矩阵，然后对协方差矩阵进行特征值分解。特征值的大小就代表了对应特征向量的重要程度，具有较小特征值的特征向量越多，则说明对应的向量具有越低的rank。为了更好的衡量rank和特征值的关系，我们follow loki引入如下公式：
% Rank_l @ v = min{d\inz+: \sum_{i=1}^{d}\lambda^i_{l} >= v/100}
% 其中\lambda^i_l是对第l层key的协方差矩阵进行特征值分解得到的第i大特征值。
In this section, we conduct a numerical analysis to quantify how rotary position embedding (RoPE) alters the principal-component structure of the key states. We perform principal component analysis (PCA) on the key tensors before and after applying RoPE, denoted as \textit{pre-RoPE} and \textit{post-RoPE}, respectively. Concretely, we first compute the covariance matrix of the key states and then carry out an eigenvalue decomposition. The magnitude of each eigenvalue reflects the contribution of its associated eigenvector: the presence of many small eigenvalues indicates that the corresponding representation is effectively low rank. 
To relate rank to the spectrum more systematically, we adopt the metrix introduced in Loki\cite{singhania2024loki}:
\[
  \operatorname{Rank}_{l}(v)
    \;=\;
    \min\Bigl\{\,d \in \mathbb{Z}_{+} :
          \sum_{i=1}^{d}
          \lambda_{l}^{(i)}
          \,\ge\,
          \frac{v}{100}
        \Bigr\},
\]
where $\lambda_{l}^{(i)}$ denotes the $i$-th largest eigenvalue of the covariance matrix for the keys in layer~$l$. This definition specifies the smallest number of principal components needed to explain at least $v\%$ of the total variance, enabling a direct comparison of the intrinsic dimensionality before and after RoPE.
% 图1(a)-(b)展示了在LLama-7b-chat-7b和Mistral-7B-Instruct-v0.2模型上pre-rope和post-rope第0层对应特征值的分布情况。可以看到，pre-rope具有更多小特征值，而应用rope后的post-rope的特征值普遍变的更大。这与我们之前的结论相符，即应用rope会使得方差增大。图1(c)-(d)展示了两个模型上不同层下的Rank(90)，可以看到post-RoPE为了保持90%的特征值，普遍需要使用更多的维度，这与我们对特征值分布的分析一致。同时，我们还看到，对于不同的层，保持90%特征值所需的rank也是不同的，这给了我们后续使用层间自适应rank的可能性。
% -------------------------------------------
% Discussion of Figures 1(a)–(d)
% -------------------------------------------

Figure~\ref{fig:eigen}(a)--(b) illustrate the eigen--value spectra of the first
layer (\(l = 0\)) for Llama--2--7B--Chat and
Mistral--7B--Instruct--v0.2 under the pre--RoPE and post--RoPE
conditions.  The pre--RoPE spectra exhibit a markedly larger number of small
eigenvalues, whereas the post--RoPE spectra are shifted upward, corroborating
our earlier finding that RoPE increases the overall variance. Figure~\ref{fig:eigen}(c)--(d) present the layer--wise
\(\operatorname{Rank}_{l}(90)\) values, i.e.\ the minimal dimensionality
needed to retain \(90\,\%\) of the variance.  For both models, the post--RoPE
condition consistently requires a higher rank, reflecting the broader spectra
observed in panels~(a)--(b).  In addition, the required rank varies
substantially across layers, indicating that a layer--adaptive rank selection
scheme could further enhance compression efficiency.

\end{document}